\documentclass[jair,twoside,11pt,theapa]{article}
\usepackage{jair, theapa, rawfonts}
\pdfoutput=1

\usepackage{xcolor}
\definecolor{Comment}{RGB}{1,80,50}
\usepackage[ruled]{algorithm2e}

\SetCommentSty{mycommfont}

\usepackage{svg}

\usepackage{amsmath}
\usepackage{amsfonts}
\usepackage{amssymb}
\usepackage{amsthm}

\DeclareMathOperator{\SMUS}{SMUS}

\usepackage{booktabs}
\usepackage{makecell}
\usepackage{url}

\usepackage{wrapfig}

\usepackage{caption}
\usepackage{subcaption}

\usepackage{tikz}
\usepackage{tikzscale}
\usepackage{tkz-graph}  
\usetikzlibrary{shapes,decorations,arrows,calc,arrows.meta,fit,positioning}
\tikzset{
    -Latex,auto,node distance =0.6 cm and 0.3 cm,semithick,
    state/.style ={rectangle, draw, minimum width = 0.5 cm},
    frontier-state/.style ={rectangle, draw, minimum width = 0.5 cm, fill=black, text=white},
    point/.style = {circle, draw, inner sep=0.04cm,fill,node contents={}},
    el/.style = {inner sep=2pt, align=left, sloped}
}

\theoremstyle{definition}
\newtheorem{definition}{Definition}
\newtheorem{example}{Example}
\theoremstyle{plain}
\newtheorem{lemma}{Lemma}
\newtheorem{theorem}{Theorem}
\newtheorem{proposition}{Proposition}
\newtheorem{corollary}{Corollary}
\theoremstyle{remark}
\newtheorem*{note}{Note}

\SetKw{Continue}{continue}

\hyphenation{CaDiCaL}

\ShortHeadings{How to discover short, shorter, and the shortest proofs of UNSAT formulas}
{Sidorov, Van der Linden, Correia, De Weerdt \& Demirović}

\begin{document}

\newcommand{\Previous}[1]{\mathtt{Previous}[#1]}
\newcommand{\Current}[1]{\mathtt{Current}[#1]}
\newcommand{\Next}[1]{\mathtt{Next}[#1]}
\newcommand{\Forgotten}[1]{\mathtt{Forgotten}[#1]}
\newcommand{\Known}[1]{\mathtt{Known}[#1]}

\newcommand{\Partition}[1]{\mathtt{Partition}(#1)}
\newcommand{\Bound}[1]{\mathtt{Bound}(#1)}
\newcommand{\Lookup}[2]{\mathtt{Lookup}(#1; #2)}
\newcommand{\Complete}[1]{\mathtt{Complete}(#1)}

\newcommand{\Frontier}[1]{\mathtt{Frontier}[#1]}
\newcommand{\fin}{\in_{\mathcal{F}}}
\newcommand{\fsubseteq}{\subseteq_{\mathcal{F}}}

\title{How to discover short, shorter, and the shortest proofs of unsatisfiability: a branch-and-bound approach for resolution proof length minimization}

\author{\name Konstantin Sidorov \email k.sidorov@tudelft.nl \\
       \name Koos van der Linden \email j.g.m.vanderLinden@tudelft.nl \\
       \addr EEMCS, Delft University of Technology, \\ Van Mourik Broekmanweg 6, 2628 XE Delft, The Netherlands
       \AND
       \name Gonçalo Homem de Almeida Correia \email G.Correia@tudelft.nl \\
       \addr Faculty of Civil Engineering and Geosciences, Delft University of Technology, \\ Stevinweg 1, 2628 CN Delft, The Netherlands
       \AND
       \name Mathijs de Weerdt \email m.m.deweerdt@tudelft.nl \\
       \name Emir Demirović \email e.demirovic@tudelft.nl \\
       \addr EEMCS, Delft University of Technology, \\ Van Mourik Broekmanweg 6, 2628 XE Delft, The Netherlands
       }

\maketitle

\begin{abstract}
Modern software for propositional satisfiability problems gives a powerful automated reasoning toolkit, capable of outputting not only a satisfiable/unsatisfiable signal but also a justification of unsatisfiability in the form of resolution proof (or a more expressive proof), which is commonly used for verification purposes. Empirically, modern SAT solvers produce relatively short proofs, however, there are no inherent guarantees that these proofs cannot be significantly reduced. This paper proposes a novel branch-and-bound algorithm for finding the shortest resolution proofs; to this end, we introduce a layer list representation of proofs that groups clauses by their level of indirection. As we show, this representation breaks all permutational symmetries, thereby improving upon the state-of-the-art symmetry-breaking and informing the design of a novel workflow for proof minimization. In addition to that, we design pruning procedures that reason on proof length lower bound, clause subsumption, and dominance. Our experiments suggest that the proofs from state-of-the-art solvers could be shortened by 30---60\% on the instances from SAT Competition 2002 and by 25---50\% on small synthetic formulas. When treated as an algorithm for finding \emph{the shortest} proof, our approach solves twice as many instances as the previous work based on SAT solving and reduces the time to optimality by orders of magnitude for the instances solved by both approaches.
\end{abstract}

\section{Introduction}

For the last two decades, the field of propositional satisfiability has experienced explosive growth, which has led to a wide range of applications of SAT solvers. Some of the common examples include model checking \shortcite{Prasad2005-intJSoftwToolsTechnolTransf}, AI planning \shortcite{Ernst1997-Other}, and combinatorial designs \cite{Zhang2021-Other}. A trait common to many domains relying on SAT solving is that unsatisfiable formulas have a meaningful domain interpretation (e.g., the correctness properties in verification domains), rather than merely being artifacts of poor modeling. However, as many of these domains have rigorous demands for the results produced by a solver, this introduces the following question: it is straightforward to check that there \emph{is} a solution once it is reported, but how to check the claim that there are \emph{none}?

\begin{wrapfigure}{r}{0.48\textwidth}
    \centering
    \includesvg[width=0.4\textwidth]{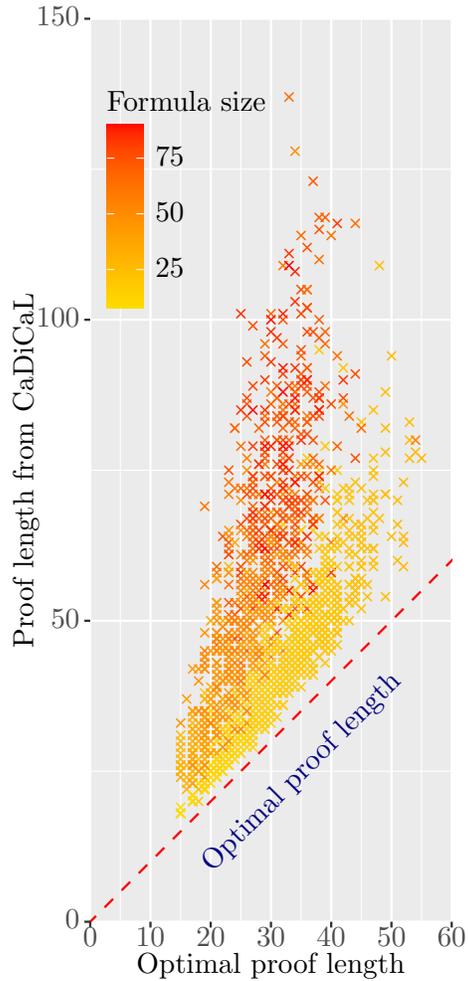}
    \caption{CaDiCaL proof length comparison against the shortest proofs on random unsatisfiable 3-CNFs.}
    \label{fig:optimal-vs-cadical}
\end{wrapfigure}

These developments created a demand for independent verification of the solver output. The standard way of achieving this is known as proof logging \cite{Heule2021-Other}, referring to the techniques that record the sequence of inferences a solver makes while establishing unsatisfiability. Established tools, such as DRAT-trim~\shortcite{Wetzler2014-Other}, then use this information to verify that the solver has made no incorrect inferences by ensuring that the proof log adheres to the standard imposed by the checker, thus playing a crucial role in building trust in the reliability of modern SAT solvers.

Thus, the role of a SAT solver now extends beyond reporting a SAT/UNSAT result; it can also be seen as an \emph{inference engine} within some pre-defined formal proof systems. Some common examples include resolution \cite{Davis1960-jAcm}, extended resolution \cite{Tseitin1983-Other}, or resolution asymmetric tautology \cite{Wetzler2014-Other}. The proof systems provide a formal underpinning to the proof-logging techniques, as they establish the inferences the solver can declare during the proof logging.

While SAT solvers are commonly used to derive proofs and empirically succeed in finding short proofs, the discovered proofs are not necessarily \emph{the shortest} available ones. To substantiate this point, we have compared the proofs for small random unsatisfiable 3-CNF formulas produced by \emph{CaDiCaL} \shortcite{Biere2024-Other}, a state-of-the-art SAT solver, with the shortest possible proofs for these formulas discovered with our approach. This comparison is summarized in Figure~\ref{fig:optimal-vs-cadical}; for formulas with \( \ge 30 \) clauses, CaDiCaL proofs are typically at least \( 50\% \) longer than the optimal ones, with many instances exhibiting at least twice the optimal length. These results reveal that the CaDiCaL proofs can be short yet far from being \emph{the shortest}. In computational terms, this result not only indicates the opportunity to improve CaDiCaL reasoning on these formulas but also estimates the ``room for improvement" in terms of the number of solver steps.

Given that, we can think about finding proofs of unsatisfiability as solving an optimization problem, where the feasible set is the set of all proofs starting from a given formula and the objective is the proof length. Unfortunately, discovering \emph{short} proofs is intuitively an exponentially harder problem than discovering \emph{valid} proofs---even a single proof may be exponentially larger than the input formula \shortcite{Haken1985-theorComputSci}---which is a likely reason why few works report successful proof minimization results. The seminal work by \citeA{Peitl2021-jArtifIntellRes} addresses this problem for resolution proofs: they propose a SAT encoding for valid proofs of a specific formula, which can be submitted to a SAT solver to either discover a resolution proof of a given length or show that none exists. This encoding, coupled with appropriate symmetry-breaking constraints, has been successfully used to establish upper bounds on the shortest proof length for a given number of clauses.

However, the aforementioned approach suffers from two significant limitations. First, as the search space for this problem grows with the size of the formula \emph{and} the length of proof, the empirical evaluation of this approach suggests that it is feasible only for formulas with up to a dozen clauses. Second, this approach enumerates lower bounds on the proof length until the proof of the target length is discovered, which is then declared optimal; in particular, at no time before the termination, is there any ``short" proof that could be returned, for example, when the allotted time runs out.

Their work also points out one of the major reasons why the proof minimization problem is difficult, namely, that the search space is highly symmetric, and proposes symmetry-breaking constraints to remedy this issue. While this technique has substantially improved the performance of a SAT solver on the proof minimization problems, we point out that this does not resolve all permutation symmetries. This observation suggests a direction for further algorithmic improvements that we pursue in our methodology. 

In this paper, we propose a novel branch-and-bound algorithm for minimizing the proof length that addresses all the aforementioned concerns. The key contributions supporting this approach are (a) a representation of the proof breaking \emph{all} symmetries stemming from clause permutations, (b) a procedure for deriving lower bounds on the proof length that generalizes the proposal from \citeA{Peitl2021-jArtifIntellRes} for the case of arbitrary formulas, which we use in our approach to stop searching for proofs once the bound becomes high enough, and (c) a dominance relation on proof prefixes that detects the proof steps that cannot improve upon the proofs explored earlier in the search.

Combined in a single workflow, this approach is capable of reducing the proof length of a state-of-the-art solver by 25\% to 50\% on synthetic formulas (e.g., 3-CNFs, graph coloring formulas), and by 30\% to 60\% on UNSAT formulas of \emph{SAT Competition 2002}. In addition, this algorithm substantially improves the \emph{complete} enumeration of proofs, as suggested by the comparisons with the encoding of \citeA{Peitl2021-jArtifIntellRes}, where our approach solves twice as many instances and terminates faster by at least an order of magnitude in the majority of solved instances. We also discover a limitation of our approach that is connected to the memory consumption of the resolution proofs, namely, that it works consistently until the proofs exceed \( 10^6 \) steps.

The remainder of the paper is structured as follows. Section~\ref{section:preliminaries} introduces the key concepts of propositional logic and the necessary context about the resolution proof system. We review the state-of-the-art approaches to proof minimization, as well as further context about the proof complexity, in Section~\ref{section:related-work}. The main novel contribution of this paper is introduced in Section~\ref{section:proofmin}, which describes the branch-and-bound procedure for enumerating the resolution proofs of an unsatisfiable formula and the optimizations used within it to improve the search time. We evaluate the performance of our approach, both for the time to optimality and for the proof length, in Section~\ref{section:experiments}. Finally, we conclude and discuss further research directions in Section~\ref{section:conclusions}. 

\section{Preliminaries} \label{section:preliminaries}

In this section, we establish the main concepts from propositional logic and SAT solving that we need to motivate the problem we address and the approach we propose. Section~\ref{subsection:propositional} contains the foundational definitions of propositional logic, such as the definition of an unsatisfiable formula. In Section~\ref{subsection:resolution}, we introduce the notion of the resolutional proof system and restate the most important facts involving it.

\subsection{Propositional Satisfiability}
\label{subsection:propositional}

We start by introducing propositional formulas in conjunctive normal form, which is a common expectation in modern SAT solvers. Given a Boolean variable \( x \), a \emph{literal} is either the variable \( x \) itself or its negation \( \bar{x}\). A \emph{clause} \( \omega \) is a (possibly empty) set of literals: \( \omega = \{ \ell_1, \dotsc, \ell_k \} \); a \emph{formula} \( F \) is a set of clauses: \( F = \{ \omega_1, \dotsc, \omega_m \}\). Since clauses and formulas are commonly interpreted in logical terms, we also use the conjunction/disjunction notation interchangeably with the set notation. For example, the terms \( \{ x, \bar{y} \} \) and \( x \vee \bar{y} \) correspond to the same clause, while the terms \( \{ \{ x, \bar{y} \}, \{ \bar{x}, z \} \} \) and \( (x \vee \bar{y}) \wedge (\bar{x} \vee z) \) correspond to the same formula.

Given a formula \( F \) over variables \( x_1, \dotsc, x_n \), a \emph{model} \( \boldsymbol{x} \) is a mapping from variables to Boolean values. A model \emph{satisfies} a clause \( \{ \ell_1, \dotsc, \ell_m \} \) if at least one of the literals evaluates to true, that is, there is an index \( j \) such that either \( \ell_j = x_t \) and \( \boldsymbol{x}_t = 1 \) or \( \ell_j = \bar{x}_t \) and \( \boldsymbol{x}_t = 0 \). By extension, a model satisfies a formula \( F \) if it satisfies all its clauses. If a model does not satisfy a clause or a formula, we say that the model \emph{falsifies} the clause or formula. Finally, we call a formula \emph{satisfiable} (SAT) if there is a model that satisfies it; otherwise, if all models falsify a formula, we call it \emph{unsatisfiable} (UNSAT).

\begin{note}
    We assume that clauses do not contain repeated variables, because \( x \vee x = x \) and \( x \vee \bar{x} = \top \). Thus, any repeated literals can be removed, and any clause with opposite literals can be removed from the formula without loss of generality: if a model satisfies the remaining clause, it also satisfies the entire formula.
\end{note}

\begin{note}
    An empty clause \( \bot = \emptyset \) is falsified by any model as there are no literals to evaluate to true; by extension, a formula \( F \) that contains an empty clause is falsified by any model.
\end{note}

Determining the SAT/UNSAT status of a given formula is an NP-complete problem \cite{Cook1971-Other,Levin1973-problInfTransm}, which also implies the following asymmetry between SAT and UNSAT statuses: if a formula \( F \) is reported to be SAT, this can be verified independently given a satisfying model, however, no similarly simple certificate could support the claim that \( F \) is UNSAT. There are, however, a few ways to do it; the next section introduces the certification approach studied in this paper.

\subsection{Resolution Proof System}
\label{subsection:resolution}

As stated in the introduction, the UNSAT claim is not trivial to verify. To address this, in this section, we introduce the resolution rule, which is a rule for deriving an implied clause from two given clauses (which themselves may have been derived earlier); in particular, the UNSAT claim in that context corresponds to deriving the empty clause.

\begin{definition}
    Given clauses \( A = C' \vee x \) and \( B = C'' \vee \bar{x} \), a \emph{resolution} rule is an inference rule that produces a clause \( C' \vee C'' \)---also denoted as \( A \diamond B \)---from \( A \) and \( B \): \[ \frac{C' \vee x, \quad C'' \vee \bar{x}}{C' \vee C''}. \] The resulting clause \( A\diamond B\) is called the \emph{resolvent}, and the variable \( x \) is referred to as \emph{pivot variable} (or pivot literal, which in this case points to the same entity).
\end{definition}

\begin{note}
    We can assume that \( A \) and \( B \) do not contain any other ``polar" literals other than \( x \) and \( \bar{x} \) because otherwise, the resolvent would contain those polar literals and therefore would be trivially true (and hence can be discarded right away). For example, \( (x\vee y) \diamond (\bar{x} \vee \bar{y}) = y \vee \bar{y} = \top \), where \( x \) is the pivot variable. In particular, if any two clauses can be resolved, there is no ambiguity about the pivot choice.
\end{note}

\begin{theorem}
    The resolution rule is \emph{sound} in the sense that any model satisfying clauses \( A \) and \( B \) also satisfies the resolvent clause \( A \diamond B \), and is \cite{Robinson1965-jAcm} in the sense that an empty clause can be derived from a formula if and only if it is UNSAT.
\end{theorem}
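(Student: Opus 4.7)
The plan is to handle soundness and completeness separately, since they require very different techniques. Soundness is a short case analysis on the pivot: given a model $\boldsymbol{x}$ that satisfies $A = C' \vee x$ and $B = C'' \vee \bar{x}$, I would consider the Boolean value that $\boldsymbol{x}$ assigns to the pivot variable $x$. If $x$ is true under $\boldsymbol{x}$, then the literal $\bar{x}$ in $B$ is false, so some literal of $C''$ must be true, hence $C' \vee C''$ is satisfied; the symmetric argument handles the case where $x$ is false. The ``only if'' half of completeness then follows immediately by extending this observation inductively along an arbitrary resolution derivation: every clause derivable from $F$ is satisfied by every model of $F$, so if the empty clause $\bot$ is derivable then $F$ has no models at all.

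For the harder ``if'' direction, I would proceed by induction on the number of variables $n$ appearing in $F$. The base case $n = 0$ is immediate: an UNSAT formula over no variables must literally contain $\bot$, and the trivial one-step derivation suffices. For the inductive step, I pick some variable, say $x_n$, and consider the two restricted formulas $F\vert_{x_n = 1}$ and $F\vert_{x_n = 0}$, obtained by removing every clause satisfied by the assignment and deleting the assigned-false literal from the remaining clauses. Each restriction is still UNSAT (any satisfying model of $F\vert_{x_n = c}$ would extend to a model of $F$ by setting $x_n = c$) and uses strictly fewer variables, so the inductive hypothesis supplies resolution derivations of $\bot$ from each.

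The main obstacle, and the step I would write out most carefully, is \emph{lifting} these two restricted derivations back into derivations from the original $F$. Concretely, I would walk through the derivation from $F\vert_{x_n = 1}$ and, wherever a clause of $F$ originally contained $\bar{x}_n$ that was dropped during restriction, reinstate it; each resolution step remains valid because its pivot is some variable other than $x_n$, and the resolvent simply acquires $\bar{x}_n$ whenever either parent carries it. Hence the lifted derivation ends in either $\bot$ or the unit clause $\{\bar{x}_n\}$, and symmetrically the other branch yields either $\bot$ or $\{x_n\}$. If either branch already produces $\bot$ we are done; otherwise a single final resolution step on $x_n$ combines $\{x_n\}$ with $\{\bar{x}_n\}$ to yield $\bot$, completing the induction. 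The delicate points to verify in a full write-up are that restriction genuinely preserves unsatisfiability, that the lifting step never introduces a pair of opposite literals that would collapse the clause (which holds because $x_n$ itself never appears in the lifted clauses by construction), and that bookkeeping the ``which parent clause each restricted clause came from'' mapping is well-defined.
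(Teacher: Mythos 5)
Your proposal is correct. Note first that the paper itself gives no proof of this theorem: it is stated in the preliminaries as a classical fact and delegated to the citation of Robinson, so there is no in-paper argument to compare against. Your soundness argument (case split on the truth value of the pivot) and the ``only if'' direction (inductively, every derived clause is entailed by \( F \), so deriving \( \bot \) forces \( F \) to be UNSAT) are exactly the standard ones. For refutation completeness you use the classical induction on the number of variables with restriction to \( F\vert_{x_n=1} \) and \( F\vert_{x_n=0} \) and the lifting lemma (Davis--Putnam style); this differs from Robinson's original semantic-tree-based treatment (which is aimed at first-order resolution), but it is a perfectly valid and arguably more elementary route for the propositional case. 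The delicate points you flag are the right ones, and your handling of them is sound: restrictions preserve unsatisfiability because a model of a restriction extends to a model of \( F \); lifting only re-inserts \( \bar{x}_n \) (never \( x_n \)), so no clause collapses to a tautology and every pivot in the lifted derivation is a variable other than \( x_n \); and each lifted branch ends in \( \bot \) or a unit clause on \( x_n \), after which one final resolution closes the argument. One minor bookkeeping remark if you write this out in the paper's formalism: since proofs there are sequences of clauses-as-sets, the lifted derivation may repeat clauses (e.g.\ two restricted clauses lifting to the same clause), so you should remove duplicates at the end; this only shortens the derivation and does not affect the claim that \( \bot \) is derivable.
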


\begin{definition}
    \label{def:proof}
    Given an UNSAT formula \( F \), a (resolution) \emph{proof} (of unsatisfiability) is a sequence of \( M \) clauses \( (Q_1, \dotsc, Q_M) \) such that \( Q_M = \bot \) and for all \( 1 \le i < M \) either \( Q_i \in F \) (in which case \( Q_i \) is referred to as an \emph{axiom}) or \( Q_i = Q_j \diamond Q_k \) for some \( j, k < i \). The proof \emph{length} is the number of clauses \( M \) in it.
\end{definition}

\begin{example}
    \label{example:proof-definition}
    Consider an UNSAT formula \( F = \{ \{ x, \bar{y} \}, \{ \bar{x} \}, \{ y \} \} \). The following sequence of resolution steps derives an empty clause: \[ \underbrace{(\{ x, \bar{y} \} \diamond \{\bar{x} \})}_{=\{ \bar{y} \}} \diamond \{ y \}.\] 
    In our notation, this corresponds to a proof of length five with the following order of clauses: \( \{ x, \bar{y} \}, \{ \bar{x} \}, \{ \bar{y} \}, \{ y \}, \bot \). Note that the third and fourth clauses may be swapped without violating the proof definition.
\end{example}

Another way to encode the proof structure is to track the clauses derived in the proof and the mapping between the resolvents and their premises. This observation suggests the following equivalent definition of a proof:

\begin{definition}
    Given an UNSAT formula \( F \), a \emph{proof DAG} is a directed acyclic graph with the following structure: \begin{itemize}
        \item Vertices correspond to clauses over the variables of \( F \) and include the clauses of \( F \) and the empty clause.
        \item Any source vertex is a clause in \( F \), and any other vertex has two incoming edges.
        \item If \( \hat{\omega} \) has two incoming edges from \( \omega' \) and \( \omega'' \), then \( \hat{\omega} = \omega' \diamond \omega'' \).
    \end{itemize}
\end{definition}

\begin{proposition}
    Given a proof \( (Q_1, \dotsc, Q_M) \) of an UNSAT formula \( F \), there exists a proof DAG with vertices \( \{ Q_1, \dotsc, Q_M \}  \). Conversely, given a proof DAG of an UNSAT formula \( F \) with vertices \( V \), there exists a proof \( (Q_1, \dotsc, Q_M) \) such that \( \{ Q_1, \dotsc, Q_M \} = V \).
\end{proposition}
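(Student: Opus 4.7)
The plan is to prove both directions by explicit construction, treating the two definitions as encoding the same partial-order structure of derivations and differing only in whether that structure is presented as a linearization or as the underlying graph.

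For the forward direction, given a proof \( (Q_1, \dotsc, Q_M) \), I would set the vertex set to \( V := \{ Q_1, \dotsc, Q_M \} \) and, for every index \( i \) at which \( Q_i = Q_j \diamond Q_k \) with \( j, k < i \), insert directed edges from \( Q_j \) and \( Q_k \) into \( Q_i \); when a clause is derived multiple times in the sequence, I would retain just one witnessing pair of incoming edges, noting that any such choice is consistent with the resolvent condition since distinct pairs of premises yielding the same resolvent are all admissible. To verify the four conditions of a proof DAG, I would map each vertex to the smallest index at which it appears in the sequence, so that every inserted edge strictly increases this index and hence the graph is acyclic; source vertices, having no incoming edges, can only correspond to positions at which \( Q_i \) is an axiom; the clauses of \( F \) are present by inclusion of the axioms; and the empty clause lies in \( V \) because \( Q_M = \bot \).

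For the reverse direction, given a proof DAG with vertex set \( V \), I would produce any topological ordering of \( V \) with the additional property that \( \bot \) appears last. Such an ordering exists because \( \bot \) must be a sink in any valid proof DAG: the empty clause cannot serve as a premise to a resolution step, since no literal is available to play the role of pivot. Given such an ordering \( (Q_1, \dotsc, Q_M) \), every non-source vertex \( Q_i \) receives its two incoming edges from vertices that precede it, and the DAG condition identifies \( Q_i \) with their resolvent; all source vertices are in \( F \) by assumption. This makes the sequence satisfy Definition~\ref{def:proof} verbatim.

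The main obstacle is the asymmetry between the two representations: sequences permit repetitions and carry a total order, while DAGs do not. A clause may be rederived several times in the sequence, yet every vertex must have exactly two incoming edges; and a DAG has no canonical linear order that places \( \bot \) last. Both issues dissolve once one observes that, for any non-source vertex, a single witnessing resolution suffices when constructing the graph, and that \( \bot \) is forced to be a sink in any proof DAG, so the usual existence of topological orderings on finite DAGs can be invoked to recover a sequence ending in \( \bot \).
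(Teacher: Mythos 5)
The paper states this proposition without proof, so there is no in-paper argument to compare against; judged on its own merits, your reverse direction is sound (the observation that \( \bot \) must be a sink because it cannot serve as a premise, combined with the standard fact that a finite DAG admits a topological order placing a chosen sink last, is exactly what is needed), but the forward direction has a step that fails as written. You allow retaining an \emph{arbitrary} witnessing pair of premises when a clause occurs several times in the sequence, and then argue acyclicity by mapping each vertex to its smallest index of occurrence and claiming that every retained edge strictly increases this index. That claim holds only if the retained witness is the one justifying the \emph{first} occurrence of the clause. With an arbitrary witness the construction can create a cycle: take \( Q_1 = \{x,y\} \), \( Q_2 = \{\bar{y},z\} \), \( Q_3 = \{x,z\} = Q_1 \diamond Q_2 \), \( Q_4 = \{y,\bar{z}\} \), \( Q_5 = \{x,y\} = Q_3 \diamond Q_4 \) (a legal prefix of a proof, since Definition~\ref{def:proof} permits repeated clauses); retaining the position-5 witness for the vertex \( \{x,y\} \) yields the edges \( \{x,z\} \to \{x,y\} \) and \( \{x,y\} \to \{x,z\} \), so the resulting graph is not acyclic even though the resolvent condition is satisfied on every edge. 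In other words, "any such choice is consistent with the resolvent condition" is true but does not deliver a DAG.

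The repair is immediate and preserves your overall plan: for each distinct clause of the sequence, attach the justification of its \emph{earliest} occurrence --- if that occurrence is an axiom, leave the vertex as a source; if it is a resolvent \( Q_j \diamond Q_k \) with \( j,k \) strictly smaller than that occurrence, add exactly those two incoming edges. Then along every edge the smallest-index rank strictly increases from premise to resolvent, which gives acyclicity; sources are axioms by construction, the resolvent condition holds on all added edges, and \( \bot \) is a vertex because \( Q_M = \bot \). With that one-line change to the witness selection, your argument for both directions is correct.
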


The resolution proof system is important for proof logging applications as it closely matches the inferences produced by SAT solvers. For example, the implementations of \emph{clause learning}, a step taken by modern SAT solvers that seek to add a new clause into the formula before backtracking, corresponds to revisiting the clauses that triggered the backtracking and taking their resolution in an appropriate order \shortcite{Beame2004-jArtifIntellRes}. \emph{Unit propagation}, a rule commonly used in SAT solvers that prescribes to assign literal \( \ell \) to be true if there is a clause \( C \vee \ell \) such that all literals in \( C \) are falsified, also corresponds to a sequence of resolutions of clauses that were used in the propagation:

\begin{example}
    Consider the use of unit propagation in Example~\ref{example:proof-definition}. Starting with an empty model, suppose that the unit propagation discovers the following decision sequence:
    \begin{enumerate}
        \item Clause \( \{ \bar{x} \} \) is unit, assign \( x \gets 0 \).
        \item Clause \( \{ x, \bar{y} \} \) is unit,\footnote{\( \bar{y} \) is the only unassigned literal; the rest of the clause has been falsified.} assign \( y \gets 0 \).
        \item Clause \( \{ y \} \) is unit; as assigning \( y \gets 1 \) is no longer possible, halt the procedure and declare the formula UNSAT.
    \end{enumerate}

    We can traverse this sequence of inferences in reverse, and that corresponds to the proof in Example~\ref{example:proof-definition}. Observe that the final conflict corresponds to the derivation \( \{y\} \diamond \{\bar{y}\} = \bot \). Going a step back, we recognize that \( \{\bar{y} \} \) was derived using \( \{ x, \bar{y} \} \). To complete the reasoning, we need to trace how the \( \bar{x} \) was assigned, which has happened in the first step via clause \( \{ \bar{x} \} \). We can summarize this by saying that \( \{\bar{y} \} = \{ x, \bar{y}\} \diamond \{ \bar{x} \} \), and, in turn, \( \{ y \} \diamond (\{ x, \bar{y}\} \diamond \{ \bar{x} \}) = \bot \) -- which is exactly the proof introduced earlier.
\end{example}

Given an UNSAT formula, there are usually many different valid proofs of that fact. The discussion above indicates that the shorter proofs correspond to solver executions making fewer steps and are thus preferable, as they correspond, under simplifying assumptions, to faster executions of a SAT solver. We reflect this observation in the following definition, which introduces the problem we address in the remainder of this paper.

\begin{definition}
Let \( F \) be an UNSAT formula, and \( \mathtt{Proofs}[F] \) be the set of all its proofs in the sense of Definition~\ref{def:proof}. Then the \emph{proof minimization} problem is the optimization problem of form \( \min\left\{ \# R \mid R \in \mathtt{Proofs}[F] \right\} \).\footnote{Throughout the text, we use the notation \( \# X \) for the cardinality of a set \( X \).}
\end{definition}

\section{Related Work} \label{section:related-work}

One of the simplest observations about the definition of proof is that it may contain irrelevant inferences that do not impact the correctness of the proof. This observation is translated into the \emph{proof trimming} techniques \shortcite{Heule2013-Other}, a strategy for proof compression that begins with the proof conclusion (i.e., empty clause), marks inferences directly used to generate the conclusion, then the inferences used to derive \emph{those} steps, etc., and retaining only the marked inferences. This idea also appears in other proof systems under different aliases, such as tightening in the VIPR certification approach for mixed-integer programming \shortcite{Cheung2017-Other}. In application to SAT solvers, DRAT-trim \shortcite{Wetzler2014-Other} is the state-of-the-art tool for trimming clausal proofs.

The next logical step from this idea is to consider local transformations of proofs and apply a local search technique to produce shorter proofs. For the branch-and-bound trees in mixed-integer programming framework, this has been done by \shortciteA{Munoz2023-Other} by going through the subtrees and attempting to trim that subtree or discover a branching direction that derives the dual bound as good as the subtree bound. A similar idea for SAT solving is represented by the RANGER approach from \citeA{Prestwich2006-Other}, which operates on clause sets of fixed size and repeatedly deletes clauses, adds the resolvents of available clauses, or re-introduces the formula clauses, and does so until an empty clause is introduced. The distinction here is that RANGER does not attempt to compress the proof, rather, it seeks to establish a local search approach that could produce an UNSAT verdict.

Last but not least, one may treat the problem of discovering short proofs as an optimization problem, with the feasible set being defined by the proof system inference rules and the objective function encoding the proof length, typically the number of inference steps. For propositional solving, the seminal work of \citeA{Peitl2021-jArtifIntellRes} introduces this idea by constructing a propositional formula that is true if and only if a given formula \( F \) has a proof of length \( s \), and improving the search performance by enforcing the topological sort on the underlying proof DAG in order to break underlying symmetries. However, as we observe in our work, this strategy does not break all symmetries; in particular, this strategy does not account for different derivations of the same clause sets, which we exploit in Section~\ref{subsection:resolution}.

These ideas are used for the proof minimization by invoking a SAT solver with increasing values of \( s \) until the formula becomes satisfiable, in which case a satisfying model contains the proof clauses. Starting the search with \( s = 1 \) is possible, but this work also justifies the choice of a higher starting value for the proof length, which corresponds to the lower bound of the length of a valid proof. Additionally, this work also establishes the \emph{resolution number} \( h_m \) as the largest smallest proof length among the formulas with \( m \) clauses, and describes a subset of such formulas that is sufficient to enumerate in order to derive the values of \( h_m \).

Similar ideas have recently been introduced for tree-like proof systems. For example, \shortciteA{Dey2024-mathProgram} formulate the problem of finding the smallest branch-and-bound tree for 0---1 integer programs as a dynamic program, and compare the optimal tree sizes for various classes of problems against the strong branching variable selection. Another variation on this idea can be found in \shortciteA{Sidorov2024-Other}, developing an approach for constructing compact tree proofs of optimality in constrained shortest path problems; the distinctive trait of that work is that it focuses on reasoning in an \emph{ad-hoc} proof system designed for interpretability, as opposed to a proof system encoding the solver inference steps.

\section{Proof Minimization} \label{section:proofmin}

This section introduces our methodology for discovering the shortest proofs of UNSAT formulas; we structure our narrative as follows: \begin{itemize}
    \item Section~\ref{subsection:representation} introduces our novel \emph{layer list representation} and shows via Theorem~\ref{thm:uniqueness} that this representation breaks all permutation symmetries. More specifically, we demonstrate that any correct proof of the unsatisfiability of a formula maps to exactly one such representation. We also motivate this design of the representation by showing that it improves on the symmetry breaking on DAGs by \shortciteA{Fichte2020-Other}, as there are \emph{non-equivalent} proof DAGs that have the same encodings in our approach.
    \item Section~\ref{subsection:algorithm} gives a high-level overview of the \emph{branch-and-bound} approach to discovering shortest proofs. The distinctive trait of the search procedure we put forward is that it operates on layer list representations and thus inherits its symmetry-breaking properties. This part also lists the subroutines the algorithm relies upon and the properties it assumes to hold on them.
    \item Section~\ref{subsection:partition} introduces the \emph{branching} scheme that complies with the subproblem representation without introducing the full enumeration of subsets of a given formula.
    \item Section~\ref{subsection:frontier} shows that the subproblems can be simplified during the search by discarding subsumed clauses, an idea which we capture in the \emph{frontier set} definition. We also show that this simplification can be used as a pruning strategy as soon as there is a clause that is neither a premise of some resolution step nor a frontier clause.
    \item Section~\ref{subsection:dominance} establishes the \emph{dominance} relation on the subproblems. The defining trait of this relation is that if one subproblem dominates another subproblem, then any proof that can be recovered from the latter subproblem corresponds to some proof that (a) can be recovered from the former subproblem and (b) is not longer than the original proof. In terms of the optimization algorithm, it means that dominated subproblems can be ignored without the risk of discarding the optimal solution.
    \item Section~\ref{subsection:lower-bound} completes the description of our approach by establishing the \emph{lower bound} on the proof length via the cardinality of the smallest minimal unsatisfiable subset. We also discuss a procedure that is used to establish a bound on that quantity, which by transitivity is also a valid bound for our problem.
\end{itemize}

\subsection{Layer List Representation} \label{subsection:representation}

The proof minimization problem is challenging for, among other reasons, its highly symmetric nature: for example, reordering the clauses without putting premises after the conclusions yields different proofs. Since the previous work has represented the proofs as models of a propositional formula, the established way to break symmetries is to (i) encode the solutions as proof DAGs and (ii) impose the symmetry-breaking constraints of the form ``This sequence is a topological ordering of a DAG with lexicographic ordering on disconnected clauses" \cite{Fichte2020-Other}. However, we argue that breaking only those symmetries is insufficient because the proofs with the same sets of derived clauses \emph{can be encoded by different DAGs}; the next example illustrates this issue.

\begin{example}
    \label{example:more-symmetries}
    Consider an UNSAT formula \[ F = \bigg\{ \big\{ y, \bar{t} \big\}, \{ x, y, z, t \}, \{ \bar{x}, y \}, \{ \bar{y}, z \}, \{ \bar{z}, t \}, \big\{ \bar{x}, \bar{t} \big\}, \{ x, \bar{y} \} \bigg\} \] and a proof encoded by the sequence starting with the seven axioms and deriving clauses \[ \{ \bar{x}, z \}, \{ y, z, t \}, \{ \bar{x}, t \}, \{ y, t \}, \{ y \}, \{ \bar{x} \}, \{ \bar{y} \}, \bot. \] This proof can be encoded by \emph{two distinct} DAGs using only the derived clauses represented in Figure~\ref{fig:more-symmetries-dag}, with the difference being that the clause \( \{ y, z, t \} \) is derived as \( \{ x, y, z, t \} \diamond \{ \bar{x}, y \} \) in the left DAG and as \( \{ x, y, z, t \} \diamond \{ \bar{x}, z \} \) in the right DAG.

    \begin{figure}[!ht]
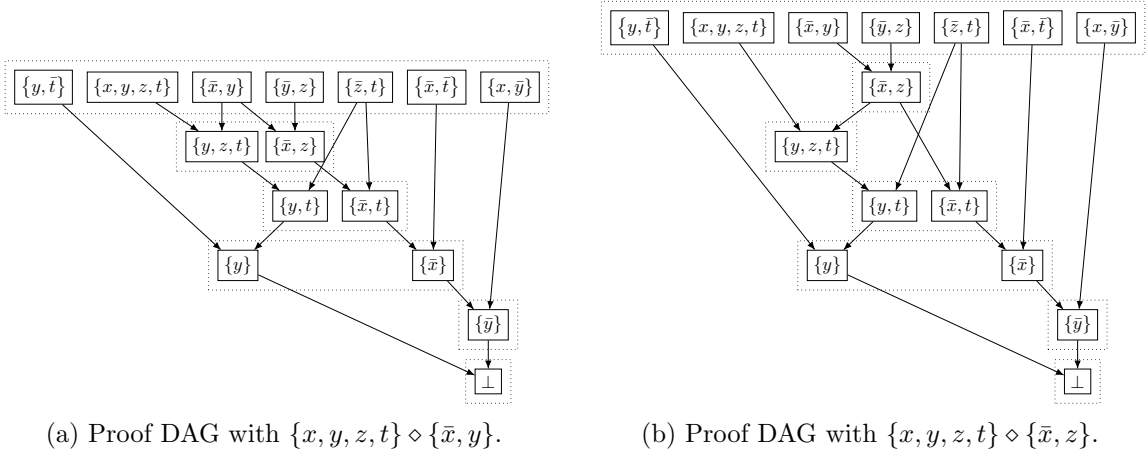

    \centering
    \subcaptionbox{Proof DAG with \( \{ x, y, z, t \} \diamond \{ \bar{x}, y \} \).\protect\label{fig:more-symmetries-dag-top}}[0.48\textwidth]{
        \resizebox{0.48\textwidth}{!}{
        \includegraphics{dag-top.tikz}}
    }
    \hfill
    \subcaptionbox{Proof DAG with \( \{ x, y, z, t \} \diamond \{ \bar{x}, z \} \).\protect\label{fig:more-symmetries-dag-bot}}[0.48\textwidth]{
        \resizebox{0.48\textwidth}{!}{
        \includegraphics{dag-bot.tikz}}
    }
    \caption{Two DAG encodings of the proof from Example~\ref{example:more-symmetries} using the same vertices but different edges. Every row on both figures corresponds to one layer.}
    \label{fig:more-symmetries-dag}
    \end{figure}

    Given that, consider the symmetry-breaking constraint that restricts the search space to the topological orderings of proof DAGs and breaks ties with lexicographic ordering given by \( x < \bar{x} < y < \bar{y} < z < \bar{z} < t < \bar{t} \). That constraint would admit the following two sequences\footnote{For the sake of brevity, we introduce only the prefixes of the canonical topological order with the first five clauses. For the purposes of our narrative, the suffixes are not important, because no choice of suffixes can make the two listed sequences equal.} of clauses despite only differing in their order, because they correspond to the topological ordering of different graphs: \begin{itemize}
        \item \( \{ x, y, z, t \}, \{ x, \bar{y} \}, \{ \bar{x}, y \}, \{ \bar{x}, t \}, \boldsymbol{\{ y, z, t \}}, \dotsc  \) for the left DAG, and
        \item \( \{ x, y, z, t \}, \{ x, \bar{y} \}, \{ \bar{x}, y \}, \{ \bar{x}, t \}, \boldsymbol{\{ \bar{y}, z \}}, \dotsc \) for the right DAG.
    \end{itemize}
\end{example}

Example~\ref{example:more-symmetries} not only shows that the DAG symmetry breaking misses some opportunities to reduce the search space; it also illustrates a concrete flaw in such a symmetry-breaking strategy, namely, it discriminates proofs having the same clauses but deriving them differently. Returning to the proofs introduced in the example, the clause \( \{ y, z, t \} \) is \emph{directly} implied by axioms in Figure~\ref{fig:more-symmetries-dag-top}, whereas Figure~\ref{fig:more-symmetries-dag-bot} prefaces this proof step with deriving \( \{ \bar{x}, z \} \), a clause used later to derive \( \{ y, z,  t \} \). However, this would not be a problem had we additionally fixed the level of indirection of a clause, for example, by constraining the DAGs to derive any clause used in the proof as soon as possible; that, in particular, would invalidate Figure~\ref{fig:more-symmetries-dag-bot} but retain Figure~\ref{fig:more-symmetries-dag-top}.

With that in mind, we introduce the structure that addresses this concern on top of symmetries broken by the DAG representation.

\begin{definition}
    \label{def:layer-list}
    Given an UNSAT formula \( F \), the \emph{layer list} \( (L^0, L^1, \dotsc, L^n) \) on \( F \) is the sequence of sets of clauses (referred to as \emph{layers}) satisfying the following properties: \begin{enumerate}
        \item \emph{Initialization}. The first layer \( L^0 = F \) includes all the axioms and only them.
        \item \emph{Termination}. One of the layers contains the empty clause: \( \exists k : 1 \le k \le n, \bot \in L^k \).
        \item \emph{Consistency}. All clauses in every layer, except the first, are obtained by resolving a clause from the preceding layer with a clause from any earlier layer: \[ \forall 1 \le j \le n, \omega \in L^j \implies \omega \in \left\{  \omega' \diamond \omega'' \mid \omega' \in L^{j-1}, \omega'' \in L^0 \cup \cdots \cup L^{j-1} \right\}. \]
        \item \emph{Take-it-or-leave-it property}. Introducing any clause to an earlier layer violates the consistency property: \[ \forall 1 \le k < j \le n, \omega \in L^j \implies \omega \notin \left\{ \omega' \diamond \omega'' \mid \omega' \in L^{k-1}, \omega'' \in L^0 \cup \cdots \cup L^{k-1} \right\}. \]
    \end{enumerate}
\end{definition}

\begin{note}
    From the proof minimization point of view, it makes sense to restrict the termination condition to enforce the \emph{last} element of the layer list to contain the empty clause and no other clauses: \( L^n = \{ \bot \} \). However, we do not enforce this in our definition, because then it would not map naturally to proofs with redundant clauses, which would make the statement of Theorem~\ref{thm:uniqueness} less straightforward. On the other hand, we enforce this restricted version of the layer list definition in our computations by halting the search once the layer with the empty clause has been produced.
\end{note}

The first three conditions replicate the proof DAG definition, as they assert that the layers constitute a topological ordering of a proof DAG, with vertices within a layer having no restrictions on their pairwise placement. In particular, any layer list can be mapped to a proof by writing out the first layer clauses in any order, then writing out the second layer clauses, and so on until the final layer, which concludes the proof. The take-it-or-leave-it property addresses the concern from Example~\ref{example:more-symmetries} by mandating that any clause used in the proof is used in the earliest available layer, as illustrated by the following example.

\begin{example}
    To show how our definition aids in breaking symmetries, consider again two proofs from Figure~\ref{fig:more-symmetries-dag}. Figure~\ref{fig:more-symmetries-dag-top} can be encoded as a layer list by grouping the clauses in boxes bounded by dotted lines and arranging layers from top to bottom. The resulting layer list looks as follows:

    \begin{align*}
        L^0 &= \Big\{ \big\{ y, \bar{t} \big\}, \{ x, y, z, t \}, \{ \bar{x}, y \}, \{ \bar{y}, z \}, \{ \bar{z}, t \}, \big\{ \bar{x}, \bar{t} \big\}, \{ x, \bar{y} \} \Big\} \\
        L^1 &= \{ \{ \bar{x}, z \}, \{ y, z, t \} \} \\
        L^2 &= \{ \{ \bar{x}, t \}, \{ y, t \} \} \\
        L^3 &= \{ \{ y \}, \{ \bar{x} \} \} \\
        L^4 &= \{ \{ \bar{y} \} \} \\
        L^5 &= \{ \bot \}.
    \end{align*}

    The first three properties of a layer list can be validated by following the arrows in Figure~\ref{fig:more-symmetries-dag-top} and validating that they indeed lead from premises to resolvents. The take-it-or-leave-it property also holds but requires more effort to verify.
    
    For the third and fourth layers, it can be validated by recognizing that a unit clause \( \{ \ell \} \) can be derived either as \( \{ \ell, v \} \diamond \{ \ell, \bar{v} \} \) or by \( \{ \ell, v \} \diamond \{ \bar{v} \} \) and acknowledging that neither is possible without using the preceding layer. For example, the only clauses from \( L^0 \cup L^1 \) that contain \( \bar{x} \) are \( \{ \bar{x}, y \} \), \( \{ \bar{x}, z \} \), and \(  \big \{ \bar{x}, \bar{t} \big \} \); since no two clauses can be resolved, the clause \( \{ \bar{x} \} \) cannot appear on layer 1 or earlier.
    
    For the second layer, we can observe that \( \{ y, t \} \) has to have a premise with literal \( t \); the only clauses fitting this description in \( L^0 \) are \( \{ x, y, z, t \} \), which cannot be a premise of \( \{ y, t \} \), and \( \{ \bar{z}, t \} \), which would require a non-axiom \( \{ y, z \} \) as the other premise. Therefore, \( \{ y, t \} \) cannot be obtained purely from axioms, satisfying the take-it-or-leave-it property.

    On the other hand, the properties of the layer list do not hold for Figure~\ref{fig:more-symmetries-dag-bot}, as \( \{ y, z, t \} \in L^2 \) violates the take-it-or-leave-it property. This clause can be represented as the resolvent of two clauses from \( L^0 \), which precludes it from being placed in \( L^2 \) or any subsequent layer.
\end{example}

We now formalize and prove the key fact of this subsection which establishes that for any proof there is a ``canonical" layer list representation that depends on the used clauses \emph{but not on their order in the proof}.

\begin{theorem}[Layer list uniqueness]
    \label{thm:uniqueness}
    Consider a formula \( F \) and a set of clauses \( P, P \cap F = \emptyset \) such that \( F \cup P \) can be ordered into a proof of \( F \). Then there is exactly one layer list \( (L^0, L^1, \dotsc, L^n) \) that derives the clauses from \( P \) and only them: \( L^1 \cup \cdots \cup L^n = P. \)
\end{theorem}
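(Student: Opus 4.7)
The plan is to construct a canonical \emph{depth function} $\ell : F \cup P \to \mathbb{N}$ and show that both existence and uniqueness reduce to identifying the layers of any valid layer list with the level sets of $\ell$. Concretely, I would define $\ell(\omega) = 0$ for $\omega \in F$ and, for $\omega \in P$,
\[ \ell(\omega) = \min \big\{\, 1 + \max(\ell(\omega'), \ell(\omega'')) \,:\, \omega = \omega' \diamond \omega'',\ \omega', \omega'' \in F \cup P \,\big\}. \]
The hypothesis that $F \cup P$ admits an ordering $Q_1, \dotsc, Q_M$ forming a proof ensures $\ell$ is finite and well-defined: by induction on the index $i$, each non-axiom has at least one decomposition into strictly earlier clauses, so the minimum is taken over a non-empty, finite set.

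For existence, I would take $L^k := \ell^{-1}(k)$ for $0 \le k \le \max_\omega \ell(\omega)$ and check Definition~\ref{def:layer-list} directly. Initialization is the statement $\ell^{-1}(0) = F$, and termination is the finiteness of $\ell(\bot)$. Consistency is witnessed by a minimum-achieving decomposition of $\omega$: one premise has $\ell$ equal to $k-1$, landing in $L^{k-1}$, and the other has $\ell \le k-1$, landing in $L^0 \cup \cdots \cup L^{k-1}$. The take-it-or-leave-it property is the contrapositive of minimality: any earlier-layer derivation would yield an $\ell$-value strictly below $k$, contradicting $\ell(\omega) = k$.

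For uniqueness, given any valid layer list $(\tilde L^0, \dotsc, \tilde L^{\tilde n})$ with $\tilde L^1 \cup \cdots \cup \tilde L^{\tilde n} = P$, I would prove $\tilde L^k = L^k$ by induction on $k$. The base case is $\tilde L^0 = F = L^0$. For the inductive step, consistency paired with the inductive hypothesis yields $\ell(\omega) \le k$ for every $\omega \in \tilde L^k$, and take-it-or-leave-it rules out $\ell(\omega) < k$: any strictly lower witness, translated back through the inductive hypothesis, places its premises into $\tilde L^{k^{*}-1}$ and $\tilde L^0 \cup \cdots \cup \tilde L^{k^{*}-1}$ for some $k^{*} < k$, which is precisely the kind of derivation forbidden by take-it-or-leave-it at $\tilde L^k$. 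For the reverse inclusion $L^k \subseteq \tilde L^k$, every $\omega \in L^k \subseteq F \cup P$ must appear in some $\tilde L^{k'}$ (by the exhaustiveness hypothesis on $\tilde L$), which by the inclusion already shown forces $\ell(\omega) = k'$, hence $k' = k$.

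The main obstacle I anticipate is a small but crucial auxiliary step: the layers of any valid layer list are pairwise disjoint, something not spelled out in Definition~\ref{def:layer-list} but relied on throughout the uniqueness argument. If the same $\omega$ appeared in $\tilde L^{k}$ and $\tilde L^{j}$ with $k < j$, then the derivation of $\omega$ witnessed by consistency at $\tilde L^k$ would be exactly the derivation ruled out by take-it-or-leave-it at $\tilde L^j$. With this disjointness in hand and the $\ell$-function at the center of both constructions, the rest of the argument is careful but routine bookkeeping between the two halves of the induction.
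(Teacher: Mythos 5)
Your argument is correct in substance but packaged differently from the paper's. The paper builds the canonical layer list greedily (each new layer collects the not-yet-placed clauses of \( P \) that are resolvents of a clause in the preceding layer with an earlier clause) and spends most of its effort on termination, i.e.\ showing no layer is empty before \( P \) is exhausted, via a contradiction argument with a nested induction over the remaining clauses; uniqueness is then a separate first-disagreeing-layer argument. Your depth function \( \ell \) is precisely the closed-form description of that greedy construction -- its level sets are the paper's layers -- and it buys two things: the take-it-or-leave-it property becomes transparently ``\( \ell(\omega) \) is minimal,'' and the paper's delicate termination argument is replaced by a one-line induction along the assumed proof ordering showing \( \ell \) is finite on \( P \), after which coverage of \( P \) by the level sets is automatic. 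What the paper's packaging buys in return is an explicitly algorithmic construction, which it immediately reuses to motivate the branch-and-bound enumeration; your pairwise-disjointness observation is correct and worth recording (for layer \( 0 \) it follows from \( P \cap F = \emptyset \)).

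Two points need tightening. First, the recursive definition of \( \ell \) is not well-founded as written, because the decomposition relation inside \( F \cup P \) can be cyclic: for instance \( \{ y \} = \{ z \} \diamond \{ \bar{z}, y \} \) and \( \{ z \} = \{ y \} \diamond \{ \bar{y}, z \} \) when all four clauses are present. Define \( \ell(\omega) \) instead as the minimum depth over all derivations of \( \omega \) from \( F \) using clauses of \( F \cup P \) (equivalently, the least fixed point of your recursion); this quantity satisfies your recursion, and your induction along the proof ordering then shows it is finite on \( P \) -- finiteness alone does not substitute for well-definedness. Second, in the reverse inclusion \( L^k \subseteq \tilde{L}^k \), the case \( \omega \in \tilde{L}^{k'} \) with \( k' > k \) is not covered by ``the inclusion already shown,'' which at that stage of the induction exists only for indices at most \( k \). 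It is instead excluded by take-it-or-leave-it for \( \tilde{L} \): a minimum-depth decomposition of \( \omega \) has premises of depth \( k-1 \) and at most \( k-1 \), which by the inductive equality lie in \( \tilde{L}^{k-1} \) and \( \tilde{L}^0 \cup \cdots \cup \tilde{L}^{k-1} \), so \( \omega \) cannot legally sit in any layer \( k' > k \). Both repairs are local and use only tools already in your proposal, so the overall argument stands.
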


\begin{proof}
    First, observe that if \( (L^0, \dotsc, L^n) \) and \( (M^0, \dotsc, M^{n'}) \) are layer lists that derive the same set of clauses, then \( n = n' \) and \( L^k = M^k \) for all \( k \). If that is not the case, we consider the earliest disagreeing layer \( L^k \neq M^k \) with \( L^1 = M^1, \dotsc, L^{k-1} = M^{k-1} \). Assume without loss of generality that there is clause \( \omega \) such that \( \omega \in L^k \) but \( \omega \notin M^k \). Given that both proofs derive the same clauses, \( \omega \) can be found in a later layer of \( M \) even though it can be derived from the layers \( M^0, \dotsc, M^{k-1} \), violating the take-it-or-leave-it property for \( M \).

    Now consider the following construction that places the clauses in the next layer by filtering out used clauses and retaining only the resolvents of a clause in the preceding layer with a clause on one of the earlier layers, terminating when the clauses in \( P \) are exhausted: \begin{align*}
        L^0 &:= F, \\
        L^1 &:= P \cap \{ \omega' \diamond \omega'' \mid \omega', \omega'' \in L^0 \}, \\
        L^2 &:= (P \setminus L^1) \cap \{ \omega' \diamond \omega'' \mid \omega' \in L^0, \omega'' \in L^0 \cup L^1 \}, \\
        &\cdots \\
        L^n &:= (P \setminus (L^1 \cup \cdots \cup L^{n-1})) \cap \{ \omega' \diamond \omega'' \mid \omega' \in L^{n-1}, \omega'' \in L^0 \cup \cdots \cup L^{n-1} \}, \\
        P &= L^1 \cup \cdots \cup L^{n-1} \cup L^n.
    \end{align*}
    If this procedure terminates, its output would immediately satisfy all the points in Definition~\ref{def:layer-list}: \( L^0 = F \), \( \bot \in P \) implies that \( \bot \in L^k \) for some positive \( k \), the consistency property is ensured by only considering the resolvent clauses in the definitions of \( L^j \), and take-it-or-leave-it property is enforced by removing the clauses from already constructed layers.

    To complete the proof, we show that all the layers are non-empty: this would imply that the described procedure terminates, because \( P \) is finite and the \( L^k \) sets are pairwise disjoint, so the cardinality of the set \( P \setminus (L^1 \cup \cdots \cup L^{n-1}) \) monotonically decreases and thus eventually has to reach zero. For a proof by contradiction, suppose that there is an empty layer, and let \( L^k \) be the first such empty set; also, let \( P' \) be the set of clauses that have not found their way to the first \( (k-1) \) layers: \[ P' := P \setminus (L^1 \cup \cdots \cup L^{k-1}). \] Since the procedure has \emph{not} terminated, \( P' \) is not empty. Given that, we show by induction on the cardinality of \( P' \) that one of the clauses \( \omega \in P' \) can be expressed as \( \omega = \omega' \diamond \omega'' \), where both premises are either axioms or can be discovered in earlier layers. This statement would be a contradiction with the assumption that \( L^k \) is empty, as the clause \( \omega \) is in \( P \), not in one of the previous layers, and can be represented as a resolvent of two previous clauses.
    
    For a proof by induction, consider first the base case is \( P' = \{ \omega \} \) being a singleton set; since \( P' \) is a subset of a proof excluding the axioms, the desired clauses can be recovered from the original proof. For the induction step, if \( \omega \in P' \) and \( \omega \) cannot be represented the same way as in the base case, let \( P' \) be the subset of clauses without \( \omega \) and all clauses from \( P' \) that depend on it. To invoke the induction hypothesis, we need to show that \( P'' \) is not empty. This is the case because otherwise \emph{all} clauses of \( P' \) save for \( \omega \) use \( \omega \) as one of its premises, which means that \( \omega \) cannot be derived from previous layers (as it would be placed in one of the previous layers) nor from \( P' \) (because all of those clauses depend on it).
\end{proof}

To summarize, in this section, we have introduced a representation of proofs that does not reproduce the same set of clauses more than once but is much simpler to generate procedurally. The construction from Theorem~\ref{thm:uniqueness} suggests that enumerating all proofs is the same as enumerating all non-empty subsets of \( \{ \omega' \diamond \omega'' : \omega', \omega'' \in L^0 \}  \), then enumerating for every subset \( L^1 \) enumerate all subsets of \( \{ \omega' \diamond \omega'' : \omega' \in L^0, \omega'' \in L^0 \cup L^1 \} \) filtering out clauses that could be derived from \( L^0 \), and proceed to do so until an empty clause is discovered. In the next subsection, we develop this insight by introducing a branch-and-bound algorithm operating on layer list prefixes.

\subsection{Branch-and-bound Search, High-level Description} \label{subsection:algorithm}

As observed earlier, we can see the enumeration of the proofs as a recursive procedure on layer list prefixes starting from \( (L^0 = F) \) and extending any given prefix \( (L^0, \dotsc, L^k) \) with the next layer \( L^{k+1} \) by: \begin{itemize}
    \item constructing the superset \( \Lambda' := \{ \omega' \diamond \omega'' : \omega' \in L^k, \omega'' \in L^0 \cup L^1 \cup \cdots \cup L^k \} \),
    \item filtering out clauses \( \Delta^j := \{ \omega' \diamond \omega'' : \omega', \omega'' \in L^0 \cup L^1 \cup \cdots \cup L^j \} \) for all previous layers \( j < k \) to restrict the clause set to \( \Lambda := \Lambda' \setminus \Delta^{k-1} \),
    \item and making a recursive call on prefixes \( (L^0, \dotsc, L^k, L^{k+1}) \) for all non-empty \( L^{k+1} \subseteq \Lambda \).
\end{itemize}
From this point, we view the prefixes of layer lists as \emph{subproblems} of the original proof minimization problem in the following sense. In the original problem, we look for the layer list that starts with \( L^0 = F \) and has the fewest clauses across all the layers. Next, we generate the prefixes \( (L^0, L^1) \) and for each of them we look for the layer list that starts with those \( (L^0, L^1) \) and has the fewest clauses; the best of those layer lists is going to be the one with fewest clauses in the original problem. With each of \emph{those} prefixes, we do the same procedure looking for the smallest layer lists with larger fixed prefixes and continue doing so until we exhaust all possible layer lists.

We also note that this procedure does not require tracking \emph{each} layer separately. More specifically, it is sufficient to have the clauses from the previous layer \( L^k \), the clauses from the union of all previous layers \( \hat{L}^k := L^0 \cup L^1 \cup \cdots \cup L^k \), and the ``filtering" sets \( \Delta^{k-1} \). Given that, we can collate all the layers, except the last, into a single set in our previous description of the search procedure. The next definition reflects these observations in a data structure used to implement the outlined search procedure.

\begin{definition}
    \label{def:subproblem}
    Given a propositional formula \( F \), a \emph{subproblem} \( \mathcal{P} \) is a tuple containing: \begin{itemize}
        \item a set of \emph{previous layer clauses} \( \Previous{\mathcal{P}} \),
        \item a set of \emph{current layer clauses} \( \Current{\mathcal{P}} \),
        \item a set of \emph{next layer clauses} \( \Next{\mathcal{P}} \),
        \item and a set of \emph{forgotten clauses} \( \Forgotten{\mathcal{P}} \).
    \end{itemize}
    The first three clause sets are collectively labelled as \emph{known} clauses: \[ \Known{\mathcal{P}} = \Previous{\mathcal{P}} \cup \Current{\mathcal{P}} \cup \Next{\mathcal{P}}. \]
    The \emph{root subproblem} \( \mathcal{P}_{root} \) is defined as follows: \(  \Previous{\mathcal{P}_{root}} = \Current{\mathcal{P}_{root}} = F \) and \( \Next{\mathcal{P}_{root}} = \Forgotten{\mathcal{P}_{root}} = \emptyset \).
\end{definition}

The individual parts of this definition are interpreted as follows. If the formula layer and the first \( k \) layers of the layer list are fixed, then \( \Previous{\cdot} \) is the union of all constructed layers \( \hat{L}^{k-1} \), excluding the most recent one, whereas \( \Current{\cdot} \) is the most recent complete layer \( L^k \). \( \Next{\cdot} \) is a \emph{subset} of the layer \( L^{k+1} \) that follows the fixed part of the layer list; we define this part of the subproblem like this to add/discard clauses of the next layer one by one, as opposed to enumerating all possible subsets. Thus, the clauses in \( \Known{\cdot} \) are all the clauses that have been added to the proof in this layer list prefix.

Finally, \( \Forgotten{\cdot} \) is a superset of the clauses \( \hat{\Delta}^k \) not used on earlier occasions that also contains some of the clauses in \( \Delta^{k+1} \) removed from the layer \(L^{k+1}\) that is currently under construction. Unlike the other parts of the subproblem definition, \( \Forgotten{\cdot} \) does not translate to any part of a resulting layer list. Instead, the purpose of the  \( \Forgotten{\cdot} \) is to enforce the take-it-or-leave-it property by (i) storing clauses satisfying the consistency property for preceding layers while being absent from the proof and (ii) disregarding them during the construction of the following layers.

\begin{figure}[t!]
\centering
\includegraphics{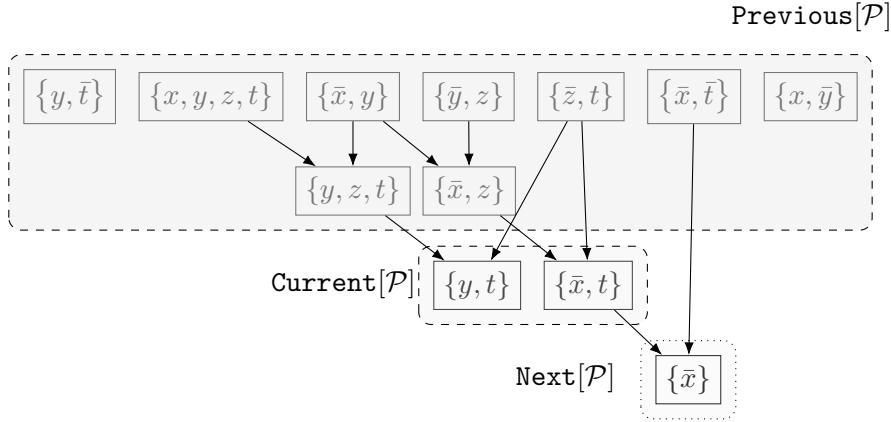}
\caption{A subproblem compatible with the proof in Figure~\ref{fig:more-symmetries-dag-top}; the second-to-last layer of the proof is a superset of the next set \( \Next{\mathcal{P}} \) of the subproblem.}
\label{fig:subproblem}
\end{figure}

To illustrate the individual components of this definition, Figure~\ref{fig:subproblem} encodes a subproblem corresponding to a proof prefix from Figure~\ref{fig:more-symmetries-dag-top}. The following example elaborates on the interactions between the subproblem components. 

\begin{example}
    \label{example:subproblem}
    In the subproblem from Figure~\ref{fig:subproblem}, the definition components are as follows: \begin{itemize}
        \item \( \Previous{\mathcal{P}} = F \cup \{ \{y, z, t \}, \{ \bar{x}, z \} \} \) stores the union of the formula and \( L^1 \),
        \item \( \Current{\mathcal{P}} = \{ \{y, t \}, \{ \bar{x}, t \} \} \) stores \( L^2 \) as the last full layer in the prefix,
        \item \( \Next{\mathcal{P}} = \{ \{ \bar{x} \} \} \) stores the \( \{ \bar{x} \} \) clause from \( L^3 \) (encoding the decision to include it in the proof) but not the \( \{ y \} \) clause. Note that this does not imply any decision made on that clause.
        \item \( \Forgotten{\mathcal{P}} \supseteq \{ \{ z, t \}, \{ y, z \}, \{ \bar{y}, t \} \} \); we do not write out the full set for the sake of brevity, but the three listed clauses are in the forgotten set, as they can be decomposed into a resolution of two earlier clauses: \begin{align*}
            \{ z, t \} &= \underbrace{\{ y, z, t \}}_{\in \Previous{\mathcal{P}}} \diamond \underbrace{\{ \bar{y}, z \}}_{\in \Previous{\mathcal{P}}} \\
            \{ y, z \} &= \underbrace{\{ y, z, t \}}_{\in \Previous{\mathcal{P}}} \diamond \underbrace{\{ y, \bar{t} \}}_{\in \Previous{\mathcal{P}}} \\
            \{ \bar{y}, t \} &= \underbrace{\{\bar{y}, z \}}_{\in \Previous{\mathcal{P}}} \diamond \underbrace{\{ \bar{z}, t \}}_{\in \Previous{\mathcal{P}}}. \\
        \end{align*} Following up on the previous point, \( \{ y \} \in \Forgotten{\mathcal{P}} \) encodes the decision to \emph{not} include this clause in the proof, whereas \( \{ y \} \notin \Forgotten{\mathcal{P}} \) in this context means that this clause is not yet decided to be in the proof.
    \end{itemize}
\end{example}

To navigate between the notions of a subproblem and a layer list, we also introduce the definition encoding the concept of a ``feasible solution" of a subproblem.

\begin{definition}
    Given a propositional formula \( F \) and a subproblem \( \mathcal{P} \), a \emph{compatible proof} is a layer list such that one of the layers \( L^k \) is equal to \( \Current{\mathcal{P}} \), the union of all preceding layers is equal to \( \Previous{\mathcal{P}} \), the next layer is a superset of \( \Next{\mathcal{P}} \), and no clauses from \( \Forgotten{\mathcal{P}} \) occur in the layer list.
\end{definition}

Algorithm~\ref{alg:branch-bound-and-commit} formalizes the insights introduced earlier in a single search procedure. The algorithm works in a branch-and-bound fashion by maintaining a queue of unexplored subproblems and an incumbent shortest proof. Each iteration takes an unexplored subproblem off the queue, prunes it if possible, updates the incumbent if needed, and splits it into new subproblems. In particular, as it normally happens with branch-and-bound algorithms, our approach is an \emph{anytime} approach, meaning that stopping it at any point after the root subproblem has been processed results in a valid resolution proof from the incumbent \( L^* \) and a valid lower bound on the proof length.

As common in branch-and-bound procedures, we need some \( \mathtt{Partition} \) procedure to partition a subproblem into smaller subproblems without discarding the optimal layer list and a \( \mathtt{Bound} \) procedure to provide a valid lower bound for a proof length of a compatible proof of a subproblem. Additionally, we use a \( \mathtt{Complete} \) procedure that provides a valid but not necessarily shortest proof starting from the clauses of the current subproblem, and the \( \mathtt{Lookup} \) procedure that checks if a previously visited subproblem dominates the current subproblem. In the next proposition, we specify the conditions on these procedures sufficient for the whole algorithm to be correct; the next subsections explain each of these subroutines in detail.

\begin{algorithm}[ht]
\caption{Branch-and-bound algorithm for resolution proof minimization}\label{alg:branch-bound-and-commit}
\KwData{Unsatisfiable propositional formula \( F \).}
\KwResult{The shortest resolution proof \( L^* \) of unsatisfiability of \( F \).}
\tcp{Subproblem queue; initialized with the root subproblem}
\( \mathcal{Q} \gets \{ \mathtt{RootSubproblem}(F) \} \)\;
\tcp{Incumbent proof}
\( L^* \gets \Complete{F} \)\;
\tcp{Dominance cache}
\( \mathcal{D} \gets \emptyset \)\;
\While{\(\mathcal{Q} \neq \emptyset\)}{
    \tcp{Take the next subproblem from the queue}
    \( \mathcal{P} \gets \mathtt{Pop}(\mathcal{Q}) \)\;
    \tcp{Look up the dominance cache, prune if a dominating node has already been explored}
    \If{\(\Lookup{\mathcal{P}}{\mathcal{D}}\)}{
        \Continue\;
    }
    \tcp{Bound the remaining proof length, prune if no valid proof can improve the incumbent}
    \If{\(\Bound{\mathcal{P}} \ge \# L^*\)}{
        \Continue\;
    }
    \tcp{Store the subproblem in the dominance cache}
    \( \mathcal{D} \gets \mathcal{D} \cup \{ \mathcal{P} \} \)\;
    \tcp{Partition the current subproblem}
    \For{\( \mathcal{P}_{sub} \in \Partition{\mathcal{P}} \)}{
        \( \mathcal{Q} \gets \mathcal{Q} \cup \{ \mathcal{P}_{sub} \} \)\;
        \tcp{Complete the new subproblems to full proofs, update the incumbent if possible}
        \( L_{sub} \gets \Complete{\Known{\mathcal{P}_{sub}}} \)\;
        \If{\( \# L_{sub} < \# L^* \)}{
            \( L^* \gets L_{sub} \)\;
        }
    }
}
\Return \( L^* \)\;
\end{algorithm}

\begin{proposition} \label{proposition:correctness}
    Algorithm~\ref{alg:branch-bound-and-commit} terminates and returns the shortest proof of an unsatisfiable propositional formula if the following assumptions hold for any subproblem \( \mathcal{P} \): \begin{enumerate}
        \item For any compatible proof of \( \mathcal{P} \) there is a compatible proof of \( \mathcal{P}' \) of at most the same length for some \( \mathcal{P}' \in \Partition{\mathcal{P}} \).
        \item Any compatible proof of \( \mathcal{P} \) either has length at least \( \Bound{\mathcal{P}} \) or can be transformed (without increasing length) to a compatible proof of another subproblem.
        \item \( \Lookup{\mathcal{P}}{\mathcal{D}} \) implies that there is a subproblem \( \mathcal{P}' \in \mathcal{D} \) such that any compatible proof of \( P \) can be transformed into a compatible proof of \( \mathcal{P}' \) without increasing its length.
        \item \( \Complete{\mathcal{P}} \) returns a valid resolution proof of \( \Known{\mathcal{P}} \).
    \end{enumerate}
\end{proposition}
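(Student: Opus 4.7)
The plan is to separate the argument into a termination claim and a correctness claim, each hinging on different subsets of the four assumptions.

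\textbf{Termination.} Over the $n$ variables of $F$ there are at most $3^n$ distinct non-trivial clauses, so the space of possible subproblems (tuples of subsets drawn from that finite universe) is itself finite. The dominance cache $\mathcal{D}$ grows monotonically, and, assuming the natural requirement that every subproblem dominates itself (which is implicit in a reasonable reading of assumption~3), any re-popped subproblem identical to one already processed is pruned by $\Lookup{\cdot}{\cdot}$. What remains is to argue that $\Partition{\cdot}$ does not regenerate its own input; I would defer to the procedural description in Section~\ref{subsection:partition} showing that each child strictly extends the prefix encoded by $\mathcal{P}$, so along any path of the search tree subproblems must eventually exhaust the finite universe.

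\textbf{Correctness.} I would maintain the loop invariant \emph{at the top of each iteration, either $L^*$ is already a shortest proof of $F$, or some $\mathcal{P} \in \mathcal{Q}$ admits a compatible proof of length strictly less than $\#L^*$}. Initially $\mathcal{Q}$ holds only the root subproblem, whose compatible proofs coincide with $\mathtt{Proofs}[F]$, so the invariant holds. For the inductive step I would case-split on what happens to the popped subproblem $\mathcal{P}$: (a) if $\Lookup{\cdot}{\cdot}$ prunes, assumption~3 redirects every compatible proof of $\mathcal{P}$ into a (no longer) compatible proof of an earlier explored dominator in $\mathcal{D}$, whose descendants are by hypothesis already tracked; (b) if the $\Bound{\cdot}$ check prunes, assumption~2 splits compatible proofs into those of length $\ge \Bound{\mathcal{P}} \ge \#L^*$ (which cannot improve $L^*$) and those transformable into compatible proofs of other subproblems (already accounted for by the invariant); (c) otherwise, assumption~1 lifts the shortest compatible proof of $\mathcal{P}$ to some $\mathcal{P}_{sub} \in \Partition{\mathcal{P}}$ that is enqueued, while the $\Complete{\cdot}$ call together with assumption~4 absorbs any incidental improvement into $L^*$. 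When $\mathcal{Q}$ finally empties, the invariant collapses to the statement that $L^*$ is a shortest proof of $F$.

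The part I expect to be fiddly is the $\Lookup{\cdot}{\cdot}$ case. Assumption~3 only guarantees that some dominating $\mathcal{P}' \in \mathcal{D}$ exists, but the invariant requires that every length improvement reachable from $\mathcal{P}$ is already surfaced somewhere in the current state. To close this gap I would strengthen the invariant to quantify not only over queued subproblems but also over the downward closure of cache entries under the partition relation, essentially tying each dominance hit to the already-explored children of the dominator. This is the standard branch-and-bound-with-dominance bookkeeping, but stating it cleanly is where most of the care is required.
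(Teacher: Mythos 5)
The paper never spells out a proof of Proposition~\ref{proposition:correctness} (the appendix only proves the lemmas instantiating the subroutines), so there is no official argument to compare against; your invariant-plus-finiteness skeleton is the natural one and almost certainly what is intended. However, two places in your sketch are genuine gaps rather than routine bookkeeping. First, termination: assumption~3 is a \emph{soundness} condition on \(\Lookup{\cdot}{\cdot}\) only --- it says that a positive answer is justified, not that identical (or dominated) re-popped subproblems will ever be pruned --- so your ``self-dominance prunes repeats'' step is not available from the stated hypotheses, and the abstract assumptions 1--4 by themselves do not force termination. You must instead lean on the concrete \(\mathtt{Partition}\) of Lemma~\ref{lemma:branching}: each child \(\mathcal{P}'_j\) strictly enlarges \(\Known{\cdot}\cup\Forgotten{\cdot}\), and the commit child either enlarges \(\Forgotten{\cdot}\) or moves a nonempty \(\Current{\cdot}\) into \(\Previous{\cdot}\); the degenerate commit child with nothing derivable, empty \(\Next{\cdot}\) and empty \(\Current{\cdot}\) reproduces its parent and has to be dealt with explicitly (e.g.\ it admits no compatible proof and generates no further children), since without a completeness property of \(\Lookup{\cdot}{\cdot}\) the cache will not catch it.

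Second, the case you yourself flag is not closed by merely strengthening the invariant to the downward closure of \(\mathcal{D}\): when a compatible proof of the popped \(\mathcal{P}\) is redirected to a dominator \(\mathcal{P}'\in\mathcal{D}\) (or, in the \(\mathtt{Bound}\) case, to ``another subproblem'' that may itself have been pruned or not yet generated), you need a well-founded descent showing the chain of redirections terminates at something actually in \(\mathcal{Q}\) or at a proof of length \(\ge \#L^*\). A priori these length-non-increasing transformations could cycle among cache entries; the standard repair is to order the recursion by a measure such as (proof length, timestamp at which the dominator was inserted into \(\mathcal{D}\)) --- every dominator was popped strictly earlier than the subproblem it prunes --- or by \(\#\Known{\cdot}\) using Definition~\ref{def:dominance}.5, and then argue by induction along that order that the dominator's children (produced via assumption~1 when it was partitioned) already carry a compatible proof of no greater length. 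Finally, a small but real point: assumption~4 only yields a resolution proof of \(\Known{\mathcal{P}_{sub}}\); to conclude that \(L^*\) is a proof of \(F\) (and that its length is comparable to lengths of compatible proofs of \(F\)) you must glue that completion onto the prefix derivations recorded in the subproblem, which your sketch silently assumes.
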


Before proceeding to a more specific discussion, we first explain the design decisions of the more straightforward parts of this algorithm. We implement the queue \( \mathcal{Q} \) as the priority queue that pops the subproblem with the most optimistic lower bound, thereby making this approach the best-first search strategy; ties are broken in favor of more recent subproblems. The \( \mathtt{Complete} \) function triggers CaDiCaL on the \( \Known{\mathcal{P}} \) as an input formula, extracts its proof in LRAT format as introduced by \shortciteA{Cruz-Filipe2017-Other}, and decodes each derived clause into propagation sequence by traversing the dependent clauses back-to-front. The remainder of Section~\ref{section:proofmin} finishes the algorithm description by establishing the subroutines compliant with Proposition~\ref{proposition:correctness}.

\subsection{Subproblem Partitioning} \label{subsection:partition}

Given that our search space is represented as a list of sets, it is natural to assume that the branching rule is implemented by enumerating all valid sets for the next list element. However, as pointed out earlier, this amounts to enumerating all subsets of the resolvent clauses without forgotten clauses. This is problematic because the set of resolvent clauses can have a quadratic size with respect to the formula size. Fortunately, the subproblem representation explicitly lists the disposed clauses using the forgotten set; in this subsection, we leverage it to design a branching strategy that generates at most one subproblem per resolvent of a clause in \( \Current{\cdot} \) and \( \Current{\cdot} \cup \Previous{\cdot} \).

\begin{example}
    \label{example:branching}
    Consider the subproblem from Example~\ref{example:subproblem} and observe that we can list the clauses that could be in \( \Next{\mathcal{P}'} \) by evaluating pairwise resolutions between the current clauses and the current/previous clauses. In our case, this gives\footnote{Excluding the clauses already in the proof.} us a candidate list \[ \{ y \}, \{ z, t \}, \{ x, t \}, \{ \bar{y}, t \}. \]

    However, the previous example also establishes that the clauses \( \{ z, t \} \) and \( \{ \bar{y}, t \} \) are in the forgotten set and hence must be deleted. The interpretation we can give here is that those clauses could have been derived on earlier layers and therefore were derived in other parts of the search tree, making the decisions about them in the current subproblem pointless. We are left with clauses \( \{ y \}, \{ x, t \} \); suppose now that this is an \emph{ordered} list.

    All compatible proofs of \( \mathcal{P} \) can be split into three pairwise disjoint groups: \begin{itemize}
        \item \( \{ y \} \in \Next{\mathcal{P}}\). This corresponds to deriving \( \{ y \}\) in the layer \( L^3 \) and postponing the decision about the \( \{ x, t \} \) clause. These decisions can be encoded by adding \( \{ y \} \) to the next set, as illustrated in Figure~\ref{fig:branching-y}.
        \item \( \{ y \} \notin \Next{\mathcal{P}}, \{ x, t \} \in \Next{\mathcal{P}} \). This corresponds to adding \( \{ x, t \}\) to the layer \( L^3 \) and never touching \( \{ y \} \). These decisions can be encoded by adding \( \{ y \} \) to the forgotten set and  \( \{ x, t \} \) to the next set, as illustrated in Figure~\ref{fig:branching-xt}.
        \item \( \{ y \}, \{ x, t \} \notin \Next{\mathcal{P}} \). This corresponds to deriving only \( \{ \bar{x} \}\) in the layer \( L^3 \) and never touching \( \{ y \}, \{ x, t \} \). These decisions can be encoded by adding \( \{ y \}, \{ x, t \} \) to the forgotten set and shifting previous/current/next sets as illustrated in Figure~\ref{fig:branching-commit}.
    \end{itemize}

    \begin{figure}
    \centering
    \subcaptionbox{Subproblem that derives \( \{ y \} \) in the next layer.\protect\label{fig:branching-y}}{
        {\includegraphics{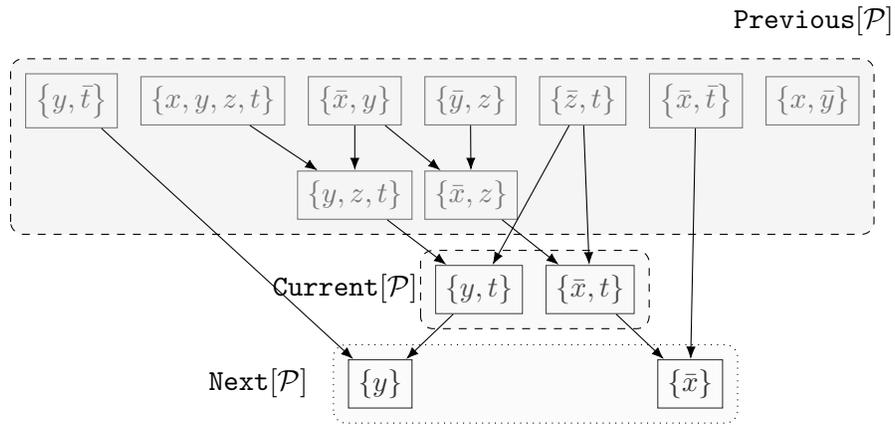}}
    }
    \subcaptionbox{Subproblem that derives \( \{ x, t \} \) but not \( \{ y \} \) in the next layer.\protect\label{fig:branching-xt}}{
        {\includegraphics{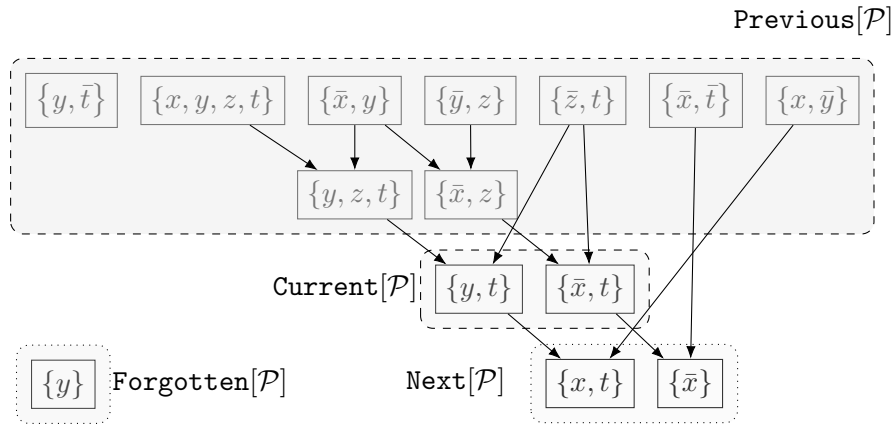}}
    }
    \subcaptionbox{Subproblem that does not derive new clauses in the next layer; \( \Next{\mathcal{P'}} = \emptyset \).\protect\label{fig:branching-commit}}{
        {\includegraphics{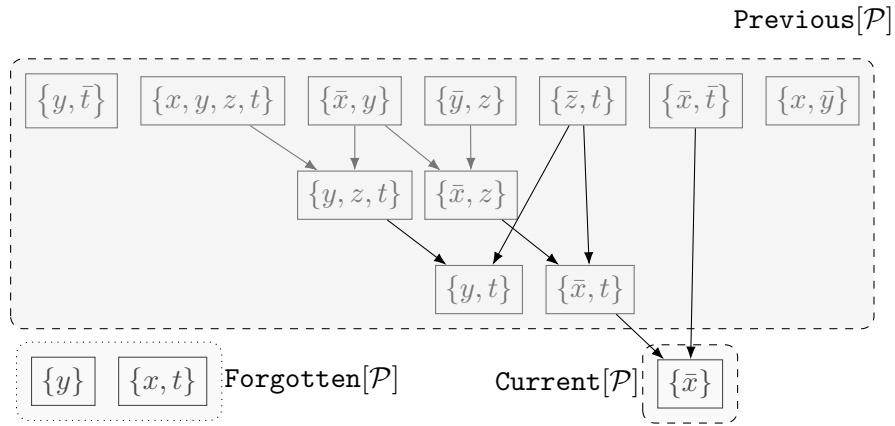}}
    }
    \caption{Subproblems that cover all the compatible proofs of the subproblem in Figure~\ref{fig:subproblem}.}
    \label{fig:subproblem-branching}
    \end{figure}
\end{example}

The next lemma formalizes this example and establishes a valid branching strategy in the sense of Proposition~\ref{proposition:correctness}.1; the proof of this fact can be found in Appendix~\ref{appendix:proofs}.

\begin{lemma}
    \label{lemma:branching}
    Let \( \mathcal{P} \) be a subproblem, and suppose that \( (\omega_1, \dotsc, \omega_m) \) is the sequence of clauses such that \begin{align*}
        \{ \omega_1, \dotsc, \omega_m \} = \{ \omega' \diamond \omega'' \mid \omega' &\in \Current{\mathcal{P}}, \omega'' \in \Current{\mathcal{P}} \cup \Previous{\mathcal{P}}, \\ \omega &\notin \Forgotten{\mathcal{P}}, \omega \notin \Known{\mathcal{P}}  \},
    \end{align*} in other words, the \( \omega_j \) lists all derivable clauses and skips forgotten clauses and clauses that are already available in the proof. Then \( \Partition{\mathcal{P}} := (\mathcal{P}'_1, \dotsc, \mathcal{P}'_m, \mathcal{P}^{commit}) \) complies with Proposition~\ref{proposition:correctness}.1 for \begin{align*}
        \Previous{\mathcal{P}'_j} &= \Previous{\mathcal{P}}, \\
        \Current{\mathcal{P}'_j} &= \Current{\mathcal{P}}, \\
        \Next{\mathcal{P}'_j} &= \Next{\mathcal{P}} \cup \{ \omega_j \}, \\
        \Forgotten{\mathcal{P}'_j} &= \Forgotten{\mathcal{P}} \cup \{ \omega_1, \dotsc, \omega_{j-1} \}
    \end{align*} and \begin{align*}
        \Previous{\mathcal{P}^{commit}} &= \Previous{\mathcal{P}} \cup \Current{\mathcal{P}}, \\
        \Current{\mathcal{P}^{commit}} &= \Next{\mathcal{P}}, \\
        \Next{\mathcal{P}^{commit}} &= \emptyset, \\
        \Forgotten{\mathcal{P}^{commit}} &= \Forgotten{\mathcal{P}} \cup \{ \omega_1, \dotsc, \omega_m \}.
    \end{align*}
\end{lemma}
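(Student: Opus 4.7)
The plan is to fix an arbitrary compatible proof $R=(L^0,\dots,L^n)$ of $\mathcal{P}$, locate the unique layer index $k$ with $L^k=\Current{\mathcal{P}}$ (so that $L^0\cup\dots\cup L^{k-1}=\Previous{\mathcal{P}}$ and $L^{k+1}\supseteq\Next{\mathcal{P}}$), and then exhibit a child in $\Partition{\mathcal{P}}$ of which the \emph{same} proof $R$ is already compatible. Because $R$ itself is never modified, the length requirement of Proposition~\ref{proposition:correctness}.1 is automatic, and the entire task collapses to verifying the four compatibility conditions for the chosen child---most delicately, that no clause freshly placed in $\Forgotten{\cdot}$ actually appears anywhere in $R$.

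The one structural fact driving the argument is the following: every candidate clause $\omega_j$ is, by construction, a resolvent of a clause of $\Current{\mathcal{P}}$ with a clause of $\Previous{\mathcal{P}}\cup\Current{\mathcal{P}}\subseteq L^0\cup\dots\cup L^k$, and $\omega_j\notin\Known{\mathcal{P}}\cup\Forgotten{\mathcal{P}}$. By the take-it-or-leave-it property (Definition~\ref{def:layer-list}.4), if $\omega_j$ occurs anywhere in $R$ it must occur in $L^{k+1}$---the earliest layer at which it could legally appear. With this in hand, I would split on whether any $\omega_j$ belongs to $L^{k+1}$. If some do, pick the smallest such $j^{\ast}$ and use $\mathcal{P}'_{j^{\ast}}$: the $\Previous,\Current$ sets are unchanged, $\Next{\mathcal{P}'_{j^{\ast}}}=\Next{\mathcal{P}}\cup\{\omega_{j^{\ast}}\}\subseteq L^{k+1}$, and the clauses $\omega_1,\dots,\omega_{j^{\ast}-1}$ added to $\Forgotten{\cdot}$ are absent from $L^{k+1}$ by minimality of $j^{\ast}$, hence absent from all of $R$ by the structural fact. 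If no $\omega_j$ lies in $L^{k+1}$, I would first argue $L^{k+1}=\Next{\mathcal{P}}$: every clause of $L^{k+1}$ is a resolvent of clauses of $L^0\cup\dots\cup L^k$ by consistency, is disjoint from $\Known{\mathcal{P}}$ because layers of a layer list are pairwise disjoint, and avoids $\Forgotten{\mathcal{P}}$ by compatibility of $R$; such a clause is therefore either in $\Next{\mathcal{P}}$ or one of the $\omega_j$, and the latter is excluded. Then $R$ is compatible with $\mathcal{P}^{commit}$, whose updated sets match $L^0\cup\dots\cup L^k$, $L^{k+1}$, and $\emptyset\subseteq L^{k+2}$ respectively, while the freshly forgotten $\omega_1,\dots,\omega_m$ are absent from $R$ by the structural fact.

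The main obstacle will be precisely this bookkeeping: guaranteeing that candidate clauses newly placed in $\Forgotten{\cdot}$ cannot lurk in some much later layer $L^{k+2},L^{k+3},\dots$ of $R$. Without the take-it-or-leave-it property this would genuinely be false---a clause \emph{could} in principle reappear later using only earlier layers---so the whole argument hinges on invoking Definition~\ref{def:layer-list}.4 at exactly this point. Once that observation is isolated, everything else is a routine check against Definition~\ref{def:subproblem} and the definition of a compatible proof, confirming that the listed $(\mathcal{P}'_1,\dots,\mathcal{P}'_m,\mathcal{P}^{commit})$ does partition the set of compatible proofs in the sense required by Proposition~\ref{proposition:correctness}.1.
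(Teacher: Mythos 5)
Your proposal is correct and follows essentially the same route as the paper's proof: case-split on whether the fixed compatible proof uses any candidate clause $\omega_j$, send it unchanged to $\mathcal{P}^{commit}$ if not, and otherwise to $\mathcal{P}'_{j}$ for the minimal used index, with the take-it-or-leave-it property guaranteeing that candidates can only appear in the layer immediately after $\Current{\mathcal{P}}$, so the newly forgotten clauses cannot occur later in the proof. Your write-up is in fact somewhat more explicit than the paper's about this last structural fact, which is the crux of the argument.
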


\subsection{Frontier Pruning} \label{subsection:frontier}

In this subsection, we introduce a mechanism for simplifying a subproblem without loss of optimality which is based on the idea that clauses subsumed by some other known clause are not helpful for deducing unsatisfiability. We start with embedding this notion in the following definition.

\begin{definition}
    A \emph{frontier} is a set of clauses \( \Frontier{F} \) of an unsatisfiable propositional formula \( F \) that do not contain another clause of that formula: \[ \Frontier{F} := \{ \omega \in F \mid \forall \omega' \in F : \omega' \not\subset \omega \}. \] The relations \( \fin \) and \( \fsubseteq \) are shorthands for membership and inclusion in the frontier: \begin{align*}
         \omega \fin F &:\Leftrightarrow \omega \in \Frontier{F}, \\
         \Omega \fsubseteq F &:\Leftrightarrow \Omega \subseteq \Frontier{F}.
    \end{align*}
    If a proof of an UNSAT formula \( F \) only uses clauses of \( \Frontier{F} \), then this proof is called a \emph{frontier proof}.
\end{definition}

In other words, a frontier excludes obviously redundant clauses (more specifically, the ones implied by some other clause). In particular, if \( F \) is UNSAT, so is \( \Frontier{F} \), and any proof of \( \Frontier{F} \) is also a proof of the original formula \( F \). More importantly, \emph{using non-frontier clauses is pointless} in the sense that they could be swapped out with clauses from the frontier of \( \Previous{\mathcal{P}} \cup \Next{\mathcal{P}} \) to derive a stronger statement. To make this point clearer, consider the following example:

\begin{example}
    \label{example:frontier}
    Consider the subproblem obtained after adding \( \{ x, t \} \) into the next layer and choosing the ``commit" subproblem. The resulting subproblem is shown in Figure~\ref{fig:frontier}, which also shows the frontier using \( \Known{\mathcal{P}} = \Previous{\mathcal{P}} \cup \Current{\mathcal{P}} \). Note that the ``frontier clause" \( \in_{\mathcal{F}} \) property is not monotonic along the proof DAG: a frontier clause, such as \( \{ \bar{z}, t \} \), may be used to derive a clause superseded by a newer clause, which is what happens with \( \{ \bar{x}, t \} \) and \( \{ \bar{x} \} \).

    Now consider the candidates for the next layer suggested by Lemma~\ref{lemma:branching}, more specifically, the derivation \( \{ y, z, t \} =  \{ \bar{x} \} \diamond \{ x, y, z, t \} \). While this clause is valid with respect to our branching strategy, its derivation uses a non-frontier clause \(  \{ x, y, z, t \} \). If we instead use the frontier clause \( \{ x, t \} \), we derive the clause \( \{ t \} = \{\bar{x} \} \diamond \{ x, t \} \), which is a stronger version of \( \{ y, z, t \} \) in the sense that if \( \{ t \} \) is true then \( \{ y, z, t \} \) also has to be true.

    \begin{figure}[!ht]
    \centering
    \includegraphics{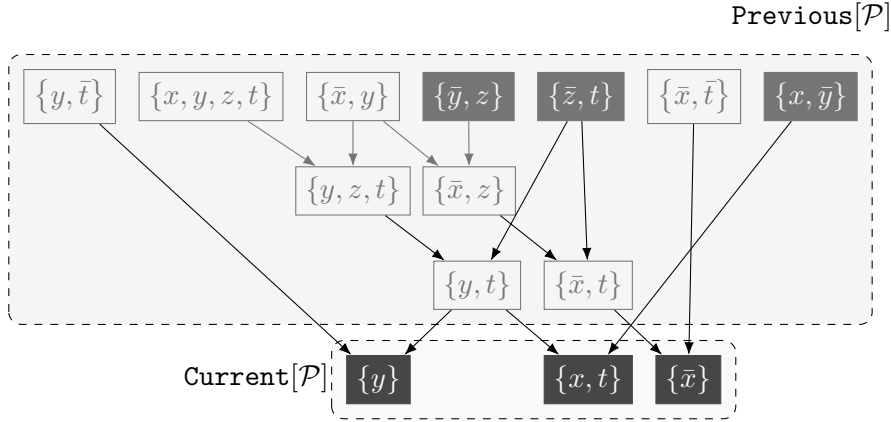}
    \caption{The frontier of the known set after deriving \( \{ x, t \} \) in Figure~\ref{fig:branching-y}; frontier clauses are highlighted with dark background and light text.}
    \label{fig:frontier}
    \end{figure}
\end{example}

The last example suggests that any proof with non-frontier clauses can be rephrased only with frontier clauses without making the proof longer -- and maybe even making it shorter. The next theorem formalizes this insight and shows by induction over the proof length that every subsequent step in the original proof can be mirrored---or even improved by removing literals---in a proof of the frontier formula; the proof can be found in Appendix~\ref{appendix:proofs}.

\begin{theorem}[Frontier sufficiency]
\label{theorem:frontier-suffiency}
    Suppose that \( F \) has a resolution proof with \( K \) resolution steps. Then the frontier \( \Frontier{F} \) has a resolution proof with at most \( K \) resolution steps.
\end{theorem}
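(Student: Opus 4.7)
The plan is to construct the frontier proof by induction along the original proof $(Q_1, \ldots, Q_M)$, maintaining as invariant that for every $i$ there is a ``shadow'' clause $Q'_i$ with $Q'_i \subseteq Q_i$ already derived using only clauses of $\Frontier{F}$. Since $Q_M = \bot$ forces $Q'_M = \bot$, such a construction automatically yields a proof of $\Frontier{F}$; the quantitative content is to ensure that the running total of resolution steps never exceeds the current count in the original proof.

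For the base case, whenever $Q_i \in F$, I would choose any $C \in F$ that is minimal under $\subseteq$ among the clauses of $F$ contained in $Q_i$. Such $C$ exists because $Q_i$ itself is a candidate and $F$ is finite, and minimality forces $C \fin F$. Setting $Q'_i := C$ adds no resolution step. For the inductive step, suppose $Q_i = Q_j \diamond Q_k$ with pivot $x$, so that $x \in Q_j$ and $\bar{x} \in Q_k$, and shadows $Q'_j \subseteq Q_j$, $Q'_k \subseteq Q_k$ are already derived. If $x \in Q'_j$ and $\bar{x} \in Q'_k$, then $Q'_j \diamond Q'_k$ is well-defined and contained in $(Q_j \setminus \{x\}) \cup (Q_k \setminus \{\bar{x}\}) = Q_i$, so I set $Q'_i := Q'_j \diamond Q'_k$ and append exactly one new resolvent. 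Otherwise, if $x \notin Q'_j$ then $Q'_j \subseteq Q_j \setminus \{x\} \subseteq Q_i$, so I reuse $Q'_i := Q'_j$ with no new step; the case $\bar{x} \notin Q'_k$ is symmetric.

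Summing over the construction, each resolvent step of the original proof contributes at most one new resolvent to the frontier proof, which yields the desired bound of at most $K$ resolution steps. The main delicate point is the set-inclusion bookkeeping across the three cases — in particular verifying that $Q'_j$ and $Q'_k$ cannot contain any other pair of polar literals that would forbid resolving them. This follows because $Q_j$ and $Q_k$ themselves share no polar literals besides the pivot (otherwise their resolvent would be trivially true, contradicting its appearance in the proof), and subsets inherit this property. Ordering is not a concern because the construction processes $i$ in the order of the original proof, so each new shadow is appended after its premises, and the resulting sequence is a valid resolution proof of $\Frontier{F}$.
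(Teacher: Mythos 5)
Your proposal is correct and follows essentially the same argument as the paper: an induction along the original proof maintaining ``shadow'' clauses \( Q'_i \subseteq Q_i \) derivable from \( \Frontier{F} \), with the case split on whether the pivot survives in both shadows. The only cosmetic difference is that the paper pads the skipped cases with a repeated/dummy derivation to keep the step indices aligned, whereas you simply reuse the existing shadow and add no step, which gives the same bound of at most \( K \) resolution steps.
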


\begin{corollary}
    \label{corollary:better-branching}
    The branching strategy from Lemma~\ref{lemma:branching} is correct when the underlying clause set is restricted to resolving the clauses from frontiers of the previous and current sets: \begin{align}
        \label{eq:action-set}
        \{ \omega_1, \dotsc, \omega_m \} = \{ \omega' \diamond \omega'' \mid \omega' &\in_{\mathcal{F}} \Current{\mathcal{P}}, \nonumber \\
        \omega'' &\in_{\mathcal{F}} \Current{\mathcal{P}} \cup \Previous{\mathcal{P}}, \\ \omega &\notin \Forgotten{\mathcal{P}}, \omega \notin \Known{\mathcal{P}} \}  \nonumber.
    \end{align}
\end{corollary}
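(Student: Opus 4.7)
The plan is to reduce Corollary~\ref{corollary:better-branching} to Lemma~\ref{lemma:branching} by invoking Theorem~\ref{theorem:frontier-suffiency} on the suffix of any compatible proof. Given an arbitrary compatible proof $P$ of a subproblem $\mathcal{P}$ together with its canonical layer list $(L^0, \dotsc, L^n)$, my goal will be to exhibit a proof of at most the same length whose first new resolution step, beyond the committed portion $\Previous{\mathcal{P}} \cup \Current{\mathcal{P}}$, uses only frontier premises. Once this is in place, the case analysis of Lemma~\ref{lemma:branching} carries over verbatim with the restricted action set \eqref{eq:action-set}.

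First I would focus on a single resolvent $\omega_i = \omega' \diamond \omega''$ appearing in layer $L^{k+1}$ and argue that if $\omega''$ is not in the frontier of $\Previous{\mathcal{P}} \cup \Current{\mathcal{P}}$, then some strictly shorter clause $\omega''' \in \Frontier{\Previous{\mathcal{P}} \cup \Current{\mathcal{P}}}$ with $\omega''' \subsetneq \omega''$ exists; resolving $\omega'''$ against $\omega'$ produces a clause that subsumes $\omega_i$. Applying Theorem~\ref{theorem:frontier-suffiency} to the suffix of $P$ then yields a complete frontier-only proof of no greater length, where every resolution step uses premises from the appropriate frontier. In particular, the \emph{first} step of the rewritten suffix is a frontier-based resolvent and therefore belongs to the candidate list in \eqref{eq:action-set}.

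Second, I would match the rewritten proof to one of the subproblems generated by the restricted partition. Since the only modification with respect to Lemma~\ref{lemma:branching} is the action set itself, the trichotomy of that lemma (add $\omega_j$ to the next layer, mark it forgotten, or commit) applies unchanged. Enumerating the surviving $\omega_j$ in any fixed order, I would pick the index matching the first new clause of the rewritten proof to place the transformed layer list inside some $\mathcal{P}'_j$, or fall back to $\mathcal{P}^{commit}$ if no further clause is introduced before the next layer is closed.

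The main obstacle will be preserving the take-it-or-leave-it property after the substitutions, because the shorter frontier clauses may subsume members of later layers and thereby violate Definition~\ref{def:layer-list}.4 of the rewritten layer list. I would handle this by re-canonicalizing the modified proof via the constructive procedure used in the proof of Theorem~\ref{thm:uniqueness}: that procedure returns, from any set of derived clauses, a unique valid layer list of at most the same length. The canonicalized layer list is then directly compatible with the identified subproblem in the restricted partition, which is exactly what Proposition~\ref{proposition:correctness}.1 demands.
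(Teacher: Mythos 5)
Your overall route---invoke Theorem~\ref{theorem:frontier-suffiency} to replace non-frontier premises by subsuming frontier clauses and then reuse the trichotomy of Lemma~\ref{lemma:branching} on the restricted candidate set---is exactly the reduction the paper intends (the corollary is stated as an immediate consequence of the frontier sufficiency theorem, with no separate proof given). However, your final step contains a genuine gap, and it sits precisely where the real difficulty of the corollary lies. After the substitution and the re-canonicalization via the construction in Theorem~\ref{thm:uniqueness}, you assert that the resulting layer list is ``directly compatible with the identified subproblem.'' This does not follow. Compatibility with a child of \( \mathcal{P} \) requires (i) that no clause of the (child's) forgotten set occurs anywhere in the layer list and (ii) that \( \Current{\mathcal{P}} \) appears as a layer with \( \Previous{\mathcal{P}} \) as the union of the preceding layers. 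Both can fail: a shrunken clause \( Q'_j \subsetneq Q_j \) is itself a resolvent of known clauses and may coincide with a clause already in \( \Forgotten{\mathcal{P}} \), in which case the transformed proof is compatible with \emph{no} descendant of \( \mathcal{P} \); and a shrunken clause may become derivable from \( \Previous{\mathcal{P}} \) alone, so the canonical layer list of the new clause set places it at or before the current layer, destroying the required prefix structure shared by every \( \mathcal{P}'_j \) and \( \mathcal{P}^{commit} \). Re-canonicalization repairs the take-it-or-leave-it property in the sense of producing \emph{some} valid layer list, but it does not preserve membership in the family of compatible proofs of \( \mathcal{P} \)'s children, which is what Proposition~\ref{proposition:correctness}.1 demands.

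Two smaller points. First, you select the child \( \mathcal{P}'_j \) by ``the first new clause of the rewritten proof'' \emph{before} re-canonicalizing; since canonicalization can change which clauses land in the next layer, this ordering of steps is circular as written. Second, the candidate set \eqref{eq:action-set} requires \( \omega' \fin \Current{\mathcal{P}} \) (frontier of the current layer alone) while \( \omega'' \fin \Current{\mathcal{P}} \cup \Previous{\mathcal{P}} \); Theorem~\ref{theorem:frontier-suffiency} only hands you premises in the frontier of a single accumulated clause set, so the match with the two separate frontier conditions needs an explicit argument (in particular, a subsuming clause drawn from \( \Previous{\mathcal{P}} \) can make the resolvent derivable one layer earlier, so it is not a legal next-layer candidate at all and must be routed through the forgotten/commit cases instead). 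A complete proof must either handle these collisions explicitly or weaken the target to the transformation-style guarantee allowed in Proposition~\ref{proposition:correctness}.2--3 (coverage by \emph{some} subproblem in the tree, not necessarily a child), and your write-up currently does neither.
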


Corollary~\ref{corollary:better-branching} implies, in particular, that the branching applied only to the frontiers remedies the problem demonstrated by Example~\ref{example:frontier}. In that case, \( \{ y, z, t \} \) can no longer be derived, as it uses the non-frontier clause; the branching strategy only creates the commit subproblem, reflecting that there is nothing meaningful to derive in the next layer.

We can combine this reasoning with a trivial observation that any derived clause has to be used somewhere later in the proof to derive the following pruning rule:

\begin{corollary}
    \label{corollary:unused}
    Any subproblem \( \mathcal{P} \) which has an unused derived clause \( \omega \notin_{\mathcal{F}} \Known{\mathcal{P}} \) can be discarded without loss of optimality.
\end{corollary}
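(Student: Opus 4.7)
The plan is to combine Theorem~\ref{theorem:frontier-suffiency} (frontier sufficiency) with Theorem~\ref{thm:uniqueness} (layer list uniqueness). The hypothesis $\omega \in \Known{\mathcal{P}}$ with $\omega \notin_{\mathcal{F}} \Known{\mathcal{P}}$ means that some $\omega' \in \Known{\mathcal{P}}$ strictly subsumes $\omega$, so the clauses already committed to the proof prefix carry a redundancy that can be exploited to construct a shorter alternative proof compatible with a different subproblem.

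First, I would take any compatible proof $R$ of $\mathcal{P}$ and, using the subsumption argument underlying Theorem~\ref{theorem:frontier-suffiency}, construct a proof $R'$ of $F$ with $\# R' \le \# R$ that does not contain $\omega$: wherever $\omega$ is used as a premise in $R$, replace the step by one using $\omega'$ instead, which either yields the same resolvent or one strictly subsuming it (with any subsumption improvement then propagated forward through the remainder of the proof by induction on proof length). Once no step uses $\omega$ any more, $\omega$ itself becomes an unused derived clause and can be deleted from the sequence, strictly shortening it.

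Next, Theorem~\ref{thm:uniqueness} associates $R'$ with its canonical layer list, which in turn corresponds to a branching path from the root subproblem via the partition scheme of Lemma~\ref{lemma:branching}: one extends $\Next{\cdot}$ clause by clause and then commits layer by layer. Because no layer of $R'$ contains $\omega$, this path leads to some subproblem $\mathcal{P}'$ whose known set does not contain $\omega$; the proof $R'$ is compatible with $\mathcal{P}'$ and has length at most $\# R$. Since the algorithm reaches $\mathcal{P}'$ independently of whether $\mathcal{P}$ is pruned, discarding $\mathcal{P}$ cannot eliminate any optimal proof.

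The main obstacle is the bookkeeping in the first step: when $\omega$ is used as a premise for a resolution step whose pivot literal is absent from $\omega'$, that step must be dropped entirely and all subsequent uses of its resolvent adjusted to consume the other premise instead (which then already subsumes the old resolvent, so the cascade terminates by further applications of the same rewrite). This is exactly the subsumption-propagation bookkeeping that the proof of Theorem~\ref{theorem:frontier-suffiency} is expected to perform; the only novelty here is applying it to a subsumed clause arising partway through a proof prefix rather than to a subsumed clause in the original axiom set.
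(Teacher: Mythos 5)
Your overall route is the one the paper intends: use the subsumption rewriting behind Theorem~\ref{theorem:frontier-suffiency} to show that \( \omega \) never needs to be used from this point on, delete it to get a strictly shorter proof, and remap that proof to another subproblem (the same device the paper uses for Lemma~\ref{lemma:bounding-mus-enum-fixed}, ``retrace the branching and forget \( \omega \) instead''). However, there is a gap: your argument never invokes the hypothesis that \( \omega \) is \emph{unused}, and without it the rewriting step can break down. A compatible proof fixes only sets of clauses, not derivations, so ``wherever \( \omega \) is used as a premise'' presupposes a choice of proof DAG; if in that DAG the subsuming clause \( \omega' \) (or some other clause of \( \Known{\mathcal{P}} \)) is itself derived \emph{from} \( \omega \) --- e.g.\ by self-subsuming resolution \( \omega' = \omega \diamond \tau \) with \( \omega' \subsetneq \omega \) --- then replacing the premise \( \omega \) by \( \omega' \) at that step is circular: the replacement clause is exactly the clause being derived, and unlike in Theorem~\ref{theorem:frontier-suffiency} you cannot appeal to \( \omega' \) being an axiom or having been derived earlier. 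This is not a cosmetic point: a subsumed derived clause that \emph{has} been used may be indispensable for deriving its own subsumer, so it is not clear the pruning rule would even be sound without the unused hypothesis, and an argument that never uses that hypothesis cannot be complete.

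The repair is precisely where ``unused'' enters. Because \( \omega \) has not served as a premise of any resolution step along the branching path, every other clause of \( \Known{\mathcal{P}} \) has a recorded derivation avoiding \( \omega \); hence the prefix survives the deletion of \( \omega \) verbatim, and premise replacement is only needed in the suffix of the compatible proof, where \( \omega' \in \Known{\mathcal{P}} \) is already available and the subsumption propagation of Theorem~\ref{theorem:frontier-suffiency} goes through unchanged. The remaining ingredients of your plan --- canonicalizing the shortened proof via Theorem~\ref{thm:uniqueness} and relying on the strict length decrease so the transformation cannot cycle among pruned subproblems --- match the paper's framework (Proposition~\ref{proposition:correctness}) and are fine at the paper's own level of rigor.
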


\subsection{Pruning by Dominance} \label{subsection:dominance}

The concept of a frontier is helpful for subproblem simplification and, by extension, for pruning subproblems that have already amassed some redundancy, as in Corollary~\ref{corollary:unused}. However, we can also use this notion to introduce the dominance relationship between the subproblems which reflects the reasoning of the form ``any proof of subproblem \( P \) can be recovered in another subproblem \( \mathcal{Q} \) \emph{that has already been explored}."

\begin{definition}
\label{def:dominance}
    Let \( \mathcal{P} \) and \( \mathcal{P}' \) be the subproblems for an input formula \( F \). Then \( \mathcal{P}' \) \emph{dominates} \( \mathcal{P} \) (\( \mathcal{P}' \succcurlyeq \mathcal{P}\)) if all five of the following conditions hold: \begin{enumerate}
        \item The dominated subproblem uses all the axioms that are used by the dominating subproblem: \( \mathtt{Axioms}[\mathcal{P}'] \subseteq \mathtt{Axioms}[\mathcal{P}] \), where \( \mathtt{Axioms}[\mathcal{P}] = F_{used} \cup F_{corr} \) is the union of axioms used in some derivations \( F_{used} \), and \( F_{corr} \subseteq F \) is the ``correcting'' set of axioms such that removing them renders the formula satisfiable: \( \omega \in F_{corr} \implies F \setminus \{ \omega \} \text{ is SAT}\).
        \item The dominating subproblem can derive the clauses that can be derived in the dominated subproblem: \( \mathtt{Derivable}[\mathcal{P}'] \supseteq \mathtt{Derivable}[\mathcal{P}] \), where \( \mathtt{Derivable}[\cdot] \) is introduced in Eq.~\eqref{eq:action-set}.
        \item The frontier of the dominating subproblem contains the frontier of the dominated subproblem: \(\Frontier{\Known{\mathcal{P'}}} \supseteq \Frontier{\Known{\mathcal{P}}}. \)
        \item Any clause forgotten in the dominating subproblem is forgotten in the dominated subproblem \( \Forgotten{\mathcal{P}'} \subseteq \Forgotten{\mathcal{P}} \).
        \item The dominating subproblem contains no more resolution steps than the dominated subproblem: \( \# \Known{\mathcal{P}'} \le \# \Known{\mathcal{P}} \).
    \end{enumerate}
\end{definition}

\begin{note}
    If the input formula \( F \) is \emph{minimally} unsatisfiable, the axiom condition trivially holds, as in this case all axioms must be used in a proof.
\end{note}

We can now use the dominance relationship to implement the \( \mathtt{Lookup} \) subroutine as a search of the dominated subproblem among the previously visited subproblems \( \mathcal{D} \). Lemma~\ref{lemma:lookup} defines this procedure, and the proof of its correctness can be found in Appendix~\ref{appendix:proofs}.

\begin{lemma}
\label{lemma:lookup}
    \( \Lookup{\mathcal{P}}{\mathcal{D}} := \exists \mathcal{P}' \in \mathcal{D} : \mathcal{P}' \succcurlyeq \mathcal{P} \) complies with Proposition~\ref{proposition:correctness}.3. 
\end{lemma}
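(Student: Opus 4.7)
The plan is to show that if $\mathcal{P}' \succcurlyeq \mathcal{P}$, then any compatible proof $\pi$ of $\mathcal{P}$ can be rewritten as a compatible proof $\pi'$ of $\mathcal{P}'$ with $\#\pi' \le \#\pi$. First, I would decompose $\pi$ into a \emph{mandatory prefix} realizing the $\#\Known{\mathcal{P}}$ clauses fixed by compatibility---the layers whose union is $\Previous{\mathcal{P}}$, the layer equal to $\Current{\mathcal{P}}$, and $\Next{\mathcal{P}}$ placed inside the following layer---together with a \emph{tail} of $T := \#\pi - \#\Known{\mathcal{P}}$ additional resolvents which, combined with $\Known{\mathcal{P}}$, derive $\bot$.

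Treating this tail as a resolution proof whose axioms are $\Known{\mathcal{P}}$, an application of Theorem~\ref{theorem:frontier-suffiency} would produce a derivation of $\bot$ from $\Frontier{\Known{\mathcal{P}}}$ using at most $T$ resolution steps; condition~3 of Definition~\ref{def:dominance} then supplies $\Frontier{\Known{\mathcal{P}}} \subseteq \Frontier{\Known{\mathcal{P}'}} \subseteq \Known{\mathcal{P}'}$, so that same derivation is a valid tail starting from $\Known{\mathcal{P}'}$ as well. I would then assemble $\pi'$ by feeding $\Known{\mathcal{P}'}$ together with those transformed tail clauses into the canonical layer list construction of Theorem~\ref{thm:uniqueness}: the first layers collect $\Previous{\mathcal{P}'}$, one layer equals $\Current{\mathcal{P}'}$, the subsequent layer includes $\Next{\mathcal{P}'}$---placeable there because condition~2 guarantees $\mathtt{Derivable}[\mathcal{P}'] \supseteq \mathtt{Derivable}[\mathcal{P}]$---and the remaining layers carry the rest of the tail. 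The length would then satisfy $\#\pi' \le \#\Known{\mathcal{P}'} + T \le \#\Known{\mathcal{P}} + T = \#\pi$, with the last inequality being exactly condition~5.

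What remains is to verify full compatibility of $\pi'$ with $\mathcal{P}'$. The easy half comes from conditions~1 and~4: since $\Forgotten{\mathcal{P}'} \subseteq \Forgotten{\mathcal{P}}$ and $\pi$ already avoids $\Forgotten{\mathcal{P}}$, every clause transferred from $\pi$ into $\pi'$ lies outside $\Forgotten{\mathcal{P}'}$ too, while $\mathtt{Axioms}[\mathcal{P}'] \subseteq \mathtt{Axioms}[\mathcal{P}]$ prevents the new tail from calling upon axioms not sanctioned by $\mathcal{P}'$. The genuinely delicate point---and the step I expect to demand the most care---is ruling out that the transformed tail accidentally materializes a clause belonging to $\Forgotten{\mathcal{P}'}$: any such $\omega$ is, by definition of the forgotten set, already derivable from layers strictly preceding those contributed by the tail, so the take-it-or-leave-it mechanism of Theorem~\ref{thm:uniqueness} pulls $\omega$ back into a prefix layer rather than leaving it free to surface in the tail. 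This observation discharges the last obligation and closes the argument.
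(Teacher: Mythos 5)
Your overall direction is right (you transform a compatible proof of the dominated subproblem \( \mathcal{P} \) into one of the dominating \( \mathcal{P}' \), which is what Proposition~\ref{proposition:correctness}.3 asks for), and the length accounting via condition~5 is fine. But there is a genuine gap exactly at the point you flag as delicate, and your proposed fix does not discharge it. By running the tail through Theorem~\ref{theorem:frontier-suffiency} you replace the original tail clauses \( Q_j \) by new clauses \( Q'_j \subseteq Q_j \) that never occurred in the original proof of \( \mathcal{P} \). Condition~4 of Definition~\ref{def:dominance} only buys you that clauses \emph{actually occurring} in the original proof avoid \( \Forgotten{\mathcal{P}} \supseteq \Forgotten{\mathcal{P}'} \); it says nothing about these freshly created subsets, and nothing in the dominance definition prevents some \( Q'_j \) from lying in \( \Forgotten{\mathcal{P}'} \) (or even in \( \Forgotten{\mathcal{P}} \)). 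Your escape route --- that the take-it-or-leave-it mechanism of the canonical construction ``pulls \( \omega \) back into a prefix layer'' --- makes things worse rather than better: compatibility forbids forgotten clauses \emph{anywhere} in the layer list, so relocating \( \omega \) does not remove the violation, and inserting new clauses into the prefix region destroys the other compatibility requirements, namely that the union of the layers preceding the current layer equals \( \Previous{\mathcal{P}'} \) exactly and that one layer equals \( \Current{\mathcal{P}'} \) exactly. For the same reason the blanket appeal to the canonical construction of Theorem~\ref{thm:uniqueness} is unsafe: the shrunken tail clauses may be derivable already from the prefix layers, so the canonical layering of \( \Known{\mathcal{P}'} \) together with the transformed tail need not reproduce the prefix structure of \( \mathcal{P}' \) at all.

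The paper's argument avoids this trap by \emph{not} transforming the tail: it copies the suffix of the given compatible proof verbatim (same clauses, same premises), after first adding the missing clauses of the corresponding next layer to the next layer of \( \mathcal{P}' \), which is justified by condition~2. Then every clause of the constructed proof is either in \( \Known{\mathcal{P}'} \) or already occurred in the proof of \( \mathcal{P} \), so condition~4 applies directly to rule out forgotten clauses, condition~3 supplies the premises for the copied derivations, and conditions~1 and~5 give the length comparison. If you want to salvage your route, you would have to strengthen the argument to show the transformed tail avoids \( \Forgotten{\mathcal{P}'} \) and stays out of the prefix layers --- neither of which follows from Definition~\ref{def:dominance} as stated --- so as written the proposal does not establish the lemma.
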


\subsection{Lower Bounding the Proof Length with UNSAT Clause Subsets} \label{subsection:lower-bound}

To complete the definition of the branch-and-bound procedure of Algorithm~\ref{alg:branch-bound-and-commit}, we need to specify a valid subroutine for deriving proof length lower bounds. We start this by restating the lower bound on the proof length of a \emph{minimally unsatisfiable formula} from \citeA{Peitl2021-jArtifIntellRes}:

\begin{lemma}[Lemma 4, \citeA{Peitl2021-jArtifIntellRes}]
\label{lemma:bounding-mus}
    Suppose that \( F \) is a minimally unsatisfiable formula (i.e., excluding any clause from \( F \) makes it satisfiable). Then any proof of \( F \) has at least \( 2\#F - 1 \) clauses: all \(\#F \) formula clauses and at least \( \#F - 1 \) derived clauses.
\end{lemma}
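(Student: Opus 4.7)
The plan is to prove the two parts of the bound separately: first that every proof must contain all $\#F$ axioms, then that it must contain at least $\#F - 1$ derived clauses. Together these give the $2\#F - 1$ bound.

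For the first part, I will argue contrapositively from minimality. If some axiom $\omega \in F$ does not appear in the proof $(Q_1, \dotsc, Q_M)$, then every clause appearing in the proof belongs to $(F \setminus \{\omega\}) \cup \{Q_j : Q_j \notin F\}$, so the same sequence is a valid resolution derivation of $\bot$ from $F \setminus \{\omega\}$. This would witness that $F \setminus \{\omega\}$ is UNSAT, contradicting the minimal unsatisfiability of $F$. Hence all $\#F$ axioms must be present.

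For the second part I would count edges in the proof DAG. First, without loss of generality, I can trim every derived clause that is not a premise of any later step: the result is still a valid proof and it is shorter, so proving the bound on the trimmed proof suffices. After trimming, every clause except the empty clause has out-degree at least one, every derived clause has in-degree exactly two, and every axiom has in-degree zero. Let $k$ be the number of derived clauses. Since the sum of in-degrees equals the sum of out-degrees, I get
\[
2k \;=\; \sum_{v} \mathrm{in\text{-}deg}(v) \;=\; \sum_{v} \mathrm{out\text{-}deg}(v) \;\ge\; \#F + (k - 1),
\]
where the last term counts at least one usage for each axiom (still valid after trimming, by the minimality argument above applied to the trimmed proof) and for each derived clause except $\bot$. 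Rearranging yields $k \ge \#F - 1$, so the total proof length is at least $\#F + k \ge 2\#F - 1$.

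The main obstacle, if any, is the bookkeeping in the WLOG trimming step: one has to verify that trimming unused derived clauses preserves both the proof property and the ``every axiom is still used'' consequence of minimality, so that the out-degree lower bound in the double-count is sound. Once that is in place the argument is a one-line handshake count. A mildly subtle point is that the empty clause is the unique out-degree-zero vertex (any other derived clause with out-degree zero would have been trimmed), which is what justifies subtracting exactly one from $\#F + k$ rather than more.
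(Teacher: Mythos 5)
Your argument is correct, but note that there is no in-paper proof to compare it against: the paper imports this statement verbatim as Lemma~4 of \citeA{Peitl2021-jArtifIntellRes} and never proves it (the appendix only leans on that external proof when establishing Lemma~\ref{lemma:bounding-mus-enum}). Your proposal is therefore a self-contained reconstruction of the standard argument, and it is sound: minimality forces every axiom to occur in (indeed, to be used by) any refutation, and the handshake count \( 2k = \#\text{edges} \ge \#F + (k-1) \) on the trimmed DAG yields \( k \ge \#F - 1 \), hence length at least \( 2\#F - 1 \). Two points you flag yourself should be spelled out. First, a single pass of deleting derived clauses with out-degree zero is not enough, since a deletion can orphan earlier resolvents; either iterate to a fixpoint or simply define the trimmed proof as the cone of the empty clause. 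Second, the claim that every axiom retains out-degree at least one after trimming does not follow literally from your part one (which only shows every axiom \emph{appears}); you need the one-line extension that an axiom never used as a premise can be dropped from the sequence, giving a refutation that avoids it and hence a refutation of \( F \setminus \{\omega\} \), contradicting minimal unsatisfiability. Both fixes are routine. A degenerate case worth a sentence is \( \bot \in F \), where minimality forces \( F = \{\bot\} \) and the bound \( 2 \cdot 1 - 1 = 1 \) holds trivially, so the ``\( \bot \) is the unique sink among derived clauses'' bookkeeping applies only when \( \bot \notin F \).
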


The key idea of our bounding approach is to generalize this bound to arbitrary formulas. The next lemma presents the resulting expression; the proof can be found in Appendix~\ref{appendix:proofs}.

\begin{lemma}
\label{lemma:bounding-mus-enum}
    Any compatible proof of a subproblem \( \mathcal{P} \) derives at least \[ \min\{ \#S - 1 \mid S \fsubseteq \Known{\mathcal{P}}, S \text{ is UNSAT} \} \] clauses not present in \( \mathcal{P} \).
\end{lemma}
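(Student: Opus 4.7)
The plan is to reinterpret any compatible proof as a resolution derivation of \( \bot \) in two stages: first the clauses of \( \Known{\mathcal{P}} \) are built, then the remaining \( K \) clauses (the ones ``not present in \( \mathcal{P} \)'') are derived on top of \( \Known{\mathcal{P}} \) treated as axioms. My aim is to extract from this second stage an UNSAT frontier subset \( S \fsubseteq \Known{\mathcal{P}} \) with \( \#S - 1 \le K \), which implies the claimed bound by the definition of the minimum.

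First, I would apply Theorem~\ref{theorem:frontier-suffiency} with \( F := \Known{\mathcal{P}} \). Before invoking it, one needs the quick observation that \( \Frontier{\Known{\mathcal{P}}} \) itself is UNSAT, as any satisfying model would also satisfy every clause it subsumes and thus the entire unsatisfiable \( \Known{\mathcal{P}} \). The theorem then produces a resolution proof of \( \Frontier{\Known{\mathcal{P}}} \) with at most \( K \) derivations; taking \( S \) to be the subset of \( \Frontier{\Known{\mathcal{P}}} \) whose clauses are actually used as axioms yields \( S \fsubseteq \Known{\mathcal{P}} \), \( S \) UNSAT, and a proof of \( \bot \) from \( S \) with at most \( K \) derivations.

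Next comes a degree-counting argument on the DAG of this residual proof. After cleaning up clauses that are not ancestors of \( \bot \) (and updating \( S \) to remove any axioms trimmed in the process), every axiom in \( S \) and every non-\( \bot \) derived clause has out-degree at least one. If the cleaned-up proof has \( K' \le K \) derivations, the sum of in-degrees over all clauses equals \( 2K' \) (each derivation has exactly two parents), while the sum of out-degrees is at least \( \#S + (K' - 1) \) (at least one per used axiom, at least one per non-\( \bot \) derivation). Equating the two sums gives \( K' \ge \#S - 1 \), hence \( K \ge \#S - 1 \ge \min\{\#S' - 1 \mid S' \fsubseteq \Known{\mathcal{P}}, S' \text{ UNSAT}\} \).

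The main obstacle I anticipate is not any individual inequality but the interplay between the two trimming operations---the frontier reduction on the axiom side and the cleanup of non-ancestors of \( \bot \) on the derivation side. Each operation must preserve the invariants relied on by the other: frontier containment of \( S \), its UNSAT-ness, and the faithfulness of the out-degree accounting. These checks are all elementary (frontier containment is preserved because the updated \( S \) only shrinks, UNSAT-ness because \( \bot \) is still derived, and the counting because only clauses with zero out-degree are removed), but they need to be performed in the right order for the chain of inequalities to close up cleanly.
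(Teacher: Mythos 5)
Your proof is correct and takes essentially the same route as the paper's: interpret the remaining portion of a compatible proof as a resolution refutation of the UNSAT subset \( S \fsubseteq \Known{\mathcal{P}} \) of clauses it actually uses, then apply the MUS-style counting argument of Lemma~\ref{lemma:bounding-mus} to conclude that at least \( \#S - 1 \) new clauses are derived. The only difference is one of explicitness: you spell out the degree-counting that the paper delegates to the proof of Lemma~\ref{lemma:bounding-mus}, and you justify the restriction to frontier clauses by invoking Theorem~\ref{theorem:frontier-suffiency}, a step the paper leaves implicit.
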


This idea can be strengthened using Corollary~\ref{corollary:unused} to only consider clause subsets \( S \) that include all unused clauses:

\begin{lemma}
\label{lemma:bounding-mus-enum-fixed}
    Let  \( \mathcal{P} \) be a subproblem and \( U \fsubseteq \Known{\mathcal{P}} \) be the set of clauses that have not been used as a premise of a resolution step. Consider a compatible proof \( L \) that does not use all of the clauses from \( U \). Then there is another subproblem \( \mathcal{P}' \) with a compatible proof \( L' \) having fewer clauses than \( L \).
\end{lemma}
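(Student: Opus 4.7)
The plan is to construct $\mathcal{P}'$ and $L'$ directly by excising an unused witness clause from both the proof and the subproblem. Specifically, I would pick a clause $\omega \in U$ such that $L$ does not use $\omega$ as a premise (guaranteed to exist by hypothesis), define $L'$ as $L$ with $\omega$ deleted from the layer $L^j$ that contains it, and define $\mathcal{P}'$ by removing $\omega$ from whichever of $\Previous{\mathcal{P}}$, $\Current{\mathcal{P}}$, or $\Next{\mathcal{P}}$ it belongs to and adding it to $\Forgotten{\mathcal{P}'}$.

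The bulk of the argument is then to verify that $L'$ is a valid layer list and compatible with $\mathcal{P}'$. Consistency of $L'$ follows immediately because $\omega$ is not used as a premise anywhere in $L$, so deleting it does not break any derivation. The take-it-or-leave-it property is preserved because removing $\omega$ only shrinks the sets of resolvable clauses available to later layers, so no clause that previously required a later layer can now be placed earlier. Termination is trivial since $\omega \neq \bot$ (as $\bot$ is always the conclusion and hence essential). Compatibility with $\mathcal{P}'$ then follows by matching the reduced layers to the adjusted component sets and observing that $\omega \in \Forgotten{\mathcal{P}'}$ is indeed absent from $L'$. Since $\#L' = \#L - 1 < \#L$, this establishes the claim.

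The hard part will be two corner cases. First, excising $\omega$ might leave its layer $L^j$ empty, which would make the layer list ill-formed. However, this is impossible without contradiction, because if $L^j = \{\omega\}$ and $j$ is not the last layer, then every clause in $L^{j+1}$ must take a premise from $L^j$, forcing the use of $\omega$ and contradicting the assumption that $\omega$ is unused. The remaining degenerate case $j = n$ with $L^n = \{\omega\}$ is ruled out by the computational termination convention $L^n = \{\bot\}$ combined with $\bot \notin U$ (since $\bot$ being ``unused'' would make $L$ not a proof at all). Second and more delicate is the case $\omega \in F$, because the initialization condition $L^0 = F$ of Definition~\ref{def:layer-list} forces every axiom to appear in the first layer of any compatible proof, preventing us from dropping $\omega$ outright. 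Resolving this case requires either an implicit assumption that no axiom of $F$ is redundant (so that $F \cap U = \emptyset$ is guaranteed, for instance via a preprocessing step extracting a minimal unsatisfiable subset) or a refined notion of $\mathcal{P}'$ in which $F \setminus \{\omega\}$ takes the role of the axiom set---an extension that has to be reconciled with the fixed root formula of Definition~\ref{def:subproblem}.
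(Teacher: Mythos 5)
Your proposal is correct and is essentially the paper's own argument: the paper likewise obtains \( \mathcal{P}' \) by forgetting \( \omega \) instead of deriving it (phrased as retracing the branching steps that led to \( \mathcal{P} \)) and takes \( L' \) to be \( L \) with the derivation of \( \omega \) omitted, while you simply spell out the layer-list and compatibility verifications that the paper leaves implicit. The axiom corner case you flag is sidestepped in the paper exactly as you suspect---its proof only ``forgets instead of derives,'' i.e.\ it treats \( \omega \) as a derived clause, which matches how the lemma is applied in the surrounding corollaries (for non-MUS inputs \( U \) consists of derived clauses only, and for MUS inputs every axiom must be used in any complete proof).
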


We can use the same reasoning to bound the number of \emph{axioms} used in the proof, as any proof has to introduce an unsatisfiable subset of the input formula, and the cardinality of the smallest such set is a valid lower bound on the used axioms. We first define the problem of finding the smallest minimal unsatisfiable subset as follows: \begin{definition}
    Suppose that \( F \) is an UNSAT formula and \( U \subseteq F \) is the set of \emph{fixed} clauses. Then the \emph{smallest minimal unsatisfiable subset} is defined as the set that has the smallest cardinality among the unsatisfiable subsets of \( F \) containing \( U \) as a subset: \[ \SMUS(F; U) := \arg\min\{ \# S \mid U \subseteq S \subseteq F, S \text{ is UNSAT} \}. \]
\end{definition}

\begin{note}
    As soon as \( U \fsubseteq F \), we can assume without loss of generality that any MUS of \( F \) can be transformed to a MUS of \( \Frontier{F} \) without increasing the cardinality; in particular, under that assumption, we have \(  \SMUS(F; U) =  \SMUS(\Frontier{F}; U) \).
\end{note}

Suppose first that the input formula \( F \) is minimally unsatisfiable. In that case, every formula clause has to be used at least once; the same applies to the derived clauses, as they can be discarded otherwise. Then we can underestimate the proof length by counting the clauses derived in \( \Known{\mathcal{P}} \) and applying Lemma~\ref{lemma:bounding-mus-enum} to bound the number of remaining clauses.

\begin{corollary}
    Given a \emph{minimally unsatisfiable} formula \( F \), let \( \mathcal{P} \) be a subproblem with \( U \subseteq \Known{\mathcal{P}} \) being the set of clauses, both from the formula and the derivations, that were not used so far. Then the expression \begin{equation}
        \label{eq:mus-lower-bound}
        \#\Known{\mathcal{P}} + \#\SMUS(\Known{\mathcal{P}}; U) - 1
    \end{equation} satisfies Proposition~\ref{proposition:correctness}.2 as a lower bound on the length of compatible proofs.
\end{corollary}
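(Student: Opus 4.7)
The plan is to take any compatible proof $L$ of $\mathcal{P}$, decompose it into the $\Known{\mathcal{P}}$ prefix followed by a tail of $t$ newly derived clauses, and argue that either $t \ge \#\SMUS(\Known{\mathcal{P}}; U) - 1$ or $L$ can be transformed into a shorter compatible proof of a different subproblem. Since $\# L = \#\Known{\mathcal{P}} + t$, the former conclusion yields the claimed bound after adding the prefix, while the latter discharges the ``transform'' disjunct of Proposition~\ref{proposition:correctness}.2 directly.

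First I would dispose of the redundancies in $L$ via Lemma~\ref{lemma:bounding-mus-enum-fixed}: if any $u \in U$ fails to appear as a premise somewhere in $L$, that lemma already supplies a strictly shorter compatible proof of another subproblem and I am done. So I may assume that every $u \in U$ is used as a premise by $L$ and, by the same token, that the tail contains no clause which does not eventually feed into $\bot$. Because $U$-clauses are by definition never used within the $\Known{\mathcal{P}}$ prefix, every $u \in U$ must appear as a tail premise. Letting $T \subseteq \Known{\mathcal{P}}$ collect the Known clauses serving as tail premises, I then have $U \subseteq T$, and $T$ is UNSAT because the tail derives $\bot$ from it.

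Next I would bound $t$ by the same edge-counting argument that underpins Lemma~\ref{lemma:bounding-mus-enum}, applied to the sub-DAG consisting of $T$ together with the tail clauses. Every tail clause has exactly two incoming edges, so the sum of in-degrees equals $2t$; on the out-degree side, each axiom in $T$ contributes at least $1$ (it is used as a premise) and each tail clause other than $\bot$ contributes at least $1$ (no idle derivations remain after the previous paragraph), giving a sum of out-degrees of at least $\# T + (t - 1)$. Equating the two sums yields $t \ge \# T - 1$; since $T$ is an UNSAT subset of $\Known{\mathcal{P}}$ containing $U$, its cardinality is bounded below by $\#\SMUS(\Known{\mathcal{P}}; U)$, and therefore $t \ge \#\SMUS(\Known{\mathcal{P}}; U) - 1$.

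The main obstacle is the hygiene step establishing $U \subseteq T$ and ruling out idle tail derivations, which is exactly where the minimal unsatisfiability of $F$ earns its keep. Minimal unsatisfiability forces every axiom of $F$ to be used in any compatible proof, Corollary~\ref{corollary:unused} extends the same conclusion to every derived clause in $\Known{\mathcal{P}}$, and Lemma~\ref{lemma:bounding-mus-enum-fixed} packages the two so that any proof failing either condition can be swapped for a strictly shorter one on another subproblem. Without these three ingredients working in concert, a $u \in U$ could be redundant inside $T$, the exercised MUS might omit it, and the bound would silently degrade to the weaker $\#\SMUS(\Known{\mathcal{P}}) - 1$.
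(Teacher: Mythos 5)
Your argument is correct and matches the paper's (largely implicit) justification of this corollary: the paper assembles it from Lemma~\ref{lemma:bounding-mus-enum} (the in/out-degree counting bound \( \#S-1 \) over UNSAT subsets of the known clauses, which you simply re-derive inline on the tail sub-DAG) together with Lemma~\ref{lemma:bounding-mus-enum-fixed} (discharging any proof that leaves a clause of \( U \) unused via the ``transform to another subproblem'' disjunct of Proposition~\ref{proposition:correctness}.2), plus the observation that every clause of \( \Known{\mathcal{P}} \) already appears in any compatible proof. The only differences are cosmetic --- you redo the edge count rather than citing Lemma~\ref{lemma:bounding-mus-enum} directly, and your ``by the same token'' dismissal of idle tail derivations is at the same level of rigor as the paper's own treatment --- so no substantive divergence or gap.
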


For the general case when \( F \) is not necessarily a MUS, we can rework Eq.~\eqref{eq:mus-lower-bound} with the previous observations together to introduce the following bounding function:

\begin{corollary}
    Given a formula \( F \), let \( \mathcal{P} \) be a subproblem with \( U \subseteq \Known{\mathcal{P}} \) being the set of derived clauses not used so far. Suppose that \( F_{used} \subseteq F \) is the set of axioms used in some derivations, and \( F_{corr} \subseteq F \) is the set of axioms such that removing them renders the formula satisfiable: \( \omega \in F_{corr} \implies F \setminus \{ \omega \} \text{ is SAT}\). Then the expression \begin{equation}
        \label{eq:lower-bound}
        \#\SMUS(F; F_{used} \cup F_{corr}) + \#(\Known{\mathcal{P}} \setminus F) + \#\SMUS(\Known{\mathcal{P}}; U) - 1
    \end{equation} satisfies Proposition~\ref{proposition:correctness}.2 as a lower bound on the length of compatible proofs.
\end{corollary}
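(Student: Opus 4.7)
The plan is to partition any compatible proof $L$ of the subproblem $\mathcal{P}$ into three disjoint groups of clauses, lower-bound each group individually, and invoke Lemma~\ref{lemma:bounding-mus-enum-fixed} to discharge the ``otherwise'' clause of Proposition~\ref{proposition:correctness}.2. Concretely, I split the clauses of $L$ into (i) the axioms of $F$ that actually occur in $L$, (ii) the derived clauses already in $\Known{\mathcal{P}} \setminus F$, and (iii) the derived clauses introduced in $L$ beyond $\Known{\mathcal{P}}$. These three sets are pairwise disjoint and cover $L$, so $\#L$ equals the sum of their cardinalities, and I proceed to bound each summand.

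For group (ii), compatibility of $L$ with $\mathcal{P}$ forces every clause of $\Known{\mathcal{P}} \setminus F$ to appear in $L$, contributing exactly $\#(\Known{\mathcal{P}} \setminus F)$. For group (i), the axioms occurring in $L$ form an UNSAT subset of $F$ (since $L$ refutes $F$). This subset must contain $F_{used}$ because those axioms are already premises of derivations recorded in $\Known{\mathcal{P}}$ and hence appear in every compatible extension. It must also contain $F_{corr}$: any $\omega \in F_{corr}$ has the property that $F \setminus \{\omega\}$ is SAT, which implies $\omega$ belongs to \emph{every} UNSAT subset of $F$, and in particular to the axiom set of any proof. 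Group (i) is therefore an UNSAT subset of $F$ containing $F_{used} \cup F_{corr}$, so by definition it has cardinality at least $\#\SMUS(F; F_{used} \cup F_{corr})$.

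For group (iii), I first invoke Lemma~\ref{lemma:bounding-mus-enum-fixed}: if $L$ fails to use every clause of $U$ as a premise, the lemma already produces a shorter compatible proof of some other subproblem, which is exactly the alternative Proposition~\ref{proposition:correctness}.2 allows. Otherwise every clause of $U$ is used in $L$, so the UNSAT subset $S \fsubseteq \Known{\mathcal{P}}$ whose refutation constitutes the tail of $L$ must satisfy $U \subseteq S$. Lemma~\ref{lemma:bounding-mus-enum} then bounds the number of derived clauses beyond $\Known{\mathcal{P}}$ below by $\#S - 1$, and by minimality we have $\#S \ge \#\SMUS(\Known{\mathcal{P}}; U)$, giving a contribution of at least $\#\SMUS(\Known{\mathcal{P}}; U) - 1$ to group (iii). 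Adding the three bounds recovers Eq.~\eqref{eq:lower-bound}.

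The main obstacle is not any single inequality but the bookkeeping required to justify that $F_{used} \cup F_{corr}$ is truly forced into the axiom set of every completion without double-counting clauses across the three groups, and to line up the ``fixed subset'' hypothesis $U \subseteq S$ of $\SMUS(\Known{\mathcal{P}}; U)$ with the ``every clause must be used'' consequence of Lemma~\ref{lemma:bounding-mus-enum-fixed}. Both hinge on properties established earlier in Section~\ref{subsection:frontier} and Section~\ref{subsection:lower-bound} rather than on new machinery, so the argument is essentially a careful assembly of three previously proven bounds into a single additive estimate.
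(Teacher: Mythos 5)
Your proof is correct and takes essentially the same route as the paper, which justifies this corollary only by the informal paragraph following it: split \( \#\Known{\mathcal{P}} \) into the derived clauses \( \#(\Known{\mathcal{P}} \setminus F) \) and the used axioms, bound the latter by \( \#\SMUS(F; F_{used} \cup F_{corr}) \) via exactly your observations that used axioms stay used and that every clause of \( F_{corr} \) lies in every UNSAT subset of \( F \), and reuse the \( \#\SMUS(\Known{\mathcal{P}}; U) - 1 \) term from the MUS case, with Lemma~\ref{lemma:bounding-mus-enum-fixed} discharging the ``transform to another subproblem'' branch of Proposition~\ref{proposition:correctness}.2. Your write-up is simply a more explicit assembly of the same three bounds.
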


The difference in this case from the MUS expression in Eq.~\eqref{eq:mus-lower-bound} is that we can no longer assume that all formula clauses are going to find their way into the shortest proof. The expression in Eq.~\eqref{eq:lower-bound} accounts for this by splitting the \( \#\Known{\mathcal{P}} \) into two parts: one that counts derived clauses, which are all known clauses not in \( F \), and the other that underestimates the number of used formula clauses given that (a) used clauses can no longer be un-used, and (b) clauses in \( F_{corr} \) cannot be excluded without making the formula SAT. As the proof still needs to choose an UNSAT subset of \( F \), Eq.~\eqref{eq:lower-bound} incorporates this with the SMUS term as the first summand.

Unfortunately, determining the \( \SMUS(\cdot, \cdot) \) quantity is itself a non-trivial task; in fact, it is known to be \( \Sigma_2^P \)-complete \cite{Liberatore2005-artifIntell}. Therefore, we also need an efficient procedure for determining SMUS lower bounds; we proceed to use them transitively in Eq.~\eqref{eq:mus-lower-bound} or Eq.~\eqref{eq:lower-bound}. To this end, we use the Forqes package by \shortciteA{Ignatiev2015-Other}; the defining trait of this approach is that it is a dual optimization algorithm in the sense that it enumerates lower bounds on the SMUS cardinality until it discovers the unsatisfiable set having exactly that length. Therefore, we run this approach with a time limit of 1 second for each of the two SMUS calls and introduce the best discovered lower bound instead of the SMUS terms in Eq.~\eqref{eq:mus-lower-bound} and Eq.~\eqref{eq:lower-bound}.

However, for subproblems with small frontier sets, this approach introduces an overhead that does not substantially slow down a \emph{single} bounding call but accumulates as the algorithm explores different subproblems, as it has to make multiple calls to the SMUS bounding subroutine. To remedy this, we run a branch-and-bound algorithm for finding the SMUS with the same time limit if the SMUS subroutine input has at most \( M_{switch} \) clauses.

More specifically, the branch-and-bound approach represents the subproblems by the set of \emph{formula} clauses and the set of \emph{mandatory} clauses; the root subproblem corresponds to the inputs of a \( \SMUS(\cdot, \cdot) \) call. The lower bound is evaluated as the number of mandatory clauses, and the branching step corresponds to taking a non-mandatory clause \( \omega \) and creating two subproblems as follows: \begin{itemize}
    \item retain all the formula clauses, add \( \omega \) to the mandatory set;
    \item remove \( \omega \) from the formula clauses and add all correcting clauses to the mandatory set; the correcting clause here means that removing it from the formula clauses makes the remaining set satisfiable.
\end{itemize}

\section{Experimental Evaluation} \label{section:experiments}

As our approach can be seen as both the procedure that eventually discovers the shortest proof and as an anytime optimization algorithm operating on resolution proofs, we are able to put forward two distinct experimental hypotheses for each of those viewpoints as follows: \begin{itemize}
    \item Our approach leverages the layer list representation to reduce the time to arrive at the shortest proofs.
    \item Our approach can enumerate the sets of implied clauses to give enough information to CaDiCaL to discover shorter proofs.
\end{itemize} In this section, we conduct an extensive empirical evaluation of our approach addressing both our hypotheses: we compare the produced proof length against CaDiCaL on competition formulas and the time to optimality against the baseline approach from \citeA{Peitl2021-jArtifIntellRes} on small, synthetic formulas. Our evaluation supports both hypotheses: for the smaller formulas, our approach is capable of discovering optimal proofs faster (by orders of magnitude in many cases), whereas, for more realistic formulas, our approach discovers substantially shorter proofs than CaDiCaL can discover on its own. An important caveat here is that SAT solvers are designed to produce \emph{faster}---but not necessarily shorter---proofs of unsatisfiability; while these two quantities are closely related, shorter proofs do not automatically imply lower runtime.

The implementation of our approach with the infrastructure for running the experiments is available in Online~Appendix~1.\footnote{Available at \url{https://dx.doi.org/10.5281/zenodo.14051922}.} We ran our experiments on the \emph{DelftBlue}~\cite{DelftHighPerformanceComputingCentre2024-Other} supercomputer, with each experimental run being allocated a single core of an Intel Xeon E5-6248R 24C 3.0GHz processor and 64GB of RAM with a time limit of one hour.

\subsection{Algorithm Configuration}

Aside from the design choices described in Section~\ref{section:proofmin}, we have to choose the heuristics for the following algorithm components: \begin{description}
    \item[Completion backend] Aside from using CaDiCaL to implement the \( \mathtt{Complete} \) subroutine of Algorithm~\ref{alg:branch-bound-and-commit}, we also provide our implementation of the DPLL procedure by \citeA{Davis1960-jAcm} and \shortciteA{Davis1962-communAcm}.
    \item[Randomization of \( \mathtt{Complete} \) procedure] We consider two options for the seed value of CaDiCaL solver used to complete the proofs, namely, a \emph{static seeding}, where {CaDiCaL} receives a seed value of 0 on every call, and \emph{dynamic seeding}, where the seed value is itself sampled randomly. Our motivation behind the dynamic seeding is that randomizing the SAT solver executions over many similar instances leads to a more diverse exploration of the space of valid proofs.
    \item[Clause ordering strategy] This part determines the order in which the clauses are added in Lemma~\ref{lemma:branching} to the forgotten set and/or the next layer.
    \item[SMUS switch-point value \( M_{switch} \)] As mentioned in Section~\ref{subsection:lower-bound}, this is the highest number of clauses for which we invoke the brute-force approach instead of Forqes.
    \item[Dominance cache clean-up] As we expect most of the dominance cache lookups (Section ~\ref{subsection:dominance}) are not going to find a dominating subproblem, we also investigate strategies for removing outdated dominance cache entries.
\end{description}

Depending on the use case, we use two different sets of heuristics. More precisely, we consider \emph{optimality-focused} runs, where we seek to produce the optimal proof as fast as possible, and \emph{length-focused} runs, where we look for the shortest possible proof by the time limit, regardless of the lower bound on the proof length. The configurations that we use for these use cases are as follows: \begin{itemize}
    \item Optimality-focused configuration: DPLL completion with dynamic seeding, clause ordering by descending clause length and descending literal frequency, switch to brute-force SMUS bounding at \( M_{switch} = 28 \), discard dominance cache entries that have not been accessed on the last \( T_{\max} = 10^5 \) iterations.
    \item Length-focused configuration: CaDiCaL completion with dynamic seeding, clause ordering by ascending clause length, and descending literal frequency, with the other parameters being the same as for the optimality-focused configuration.
\end{itemize}

See Appendix~\ref{appendix:configuration} for the empirical justification behind the choices of parameter values.

\subsection{Dataset Description}

We run our evaluation on two collections of UNSAT formulas available in Online~Appendix~2:\footnote{Available at \url{https://dx.doi.org/10.5281/zenodo.14052283}.} \begin{itemize}
    \item \emph{Synthetic} instances are the formula obtained from some procedure with input parameters influencing the formula size. Random 3-CNFs are one example, as we can specify both the number of clauses and the number of variables upfront.
    \item \emph{Competition} instances are the 251 user-submitted instances from the SAT Competition 2002 collection described by \shortciteA{Simon2005-annMathArtifIntell} that have been proven unsatisfiable within 15 seconds. The choice of that edition of the SAT Competition was motivated by our need for a benchmark realistic enough to correspond to common use cases of SAT solvers but simple enough to be manageable within our approach.
\end{itemize}

The synthetic part of the evaluation dataset comprises the following instances:
\begin{itemize}
    \item \emph{Pigeonhole principle} formulas for \( 1 \le n \le 6 \) holes and \( m = n + 1 \) pigeons.
    \item \emph{Parity principle} formulas for \( (2n-1) \) elements with \( 1 \le n \le 6 \).
    \item \emph{Ordering principle} formulas for \( 1 \le n \le 6 \) elements.
    \item \emph{Random 3-CNFs} with the number of variables \( n \) varying from 5 to 15 and the number of clauses \( m \) varying from \( 4n \) to \( 6n \).
    \item \emph{Subset cardinality} formulas for random bipartite graphs with the part size \( n \) varying between 5 and 15.
    \item \emph{Graph coloring} formulas encoding \( k \)-colorability of random \( 2(k-1) \)-regular graphs with \( k \) varying from 3 to 5 and number of vertices varying from \( 2k \) to \( 4k\).
    \item \emph{Graph coloring with clique} formulas are the same but with a random \( (k+1) \)-clique introduced in the graph, thereby guaranteeing the unsatisfiability of a formula.
    \item \emph{Saturated minimally unsatisfiable} formulas obtained with the procedure described by \citeA{Peitl2021-jArtifIntellRes} for at most 9 clauses.
\end{itemize}
Our motivation behind this choice of instances was to include instance classes that have many small UNSAT formulas, are simple enough to generate, and are diverse enough to evaluate our approach in different situations.

For both collections, we also create a MUS version of each instance collection (unless the source is already a MUS by design) by invoking the MUSer2 package by \citeA{Belov2012-jSatisfBooleanModelComput} on the input formulas. We extend the instance collection in this manner due to an empirical observation that the MUSes of the original formulas are harder for SAT solvers to solve, hinting at a distribution of instances substantially different from the original formulas. In addition, the description of the approach also suggests that a MUS input carries more structure for the proof minimizer, namely, that all formula clauses have to be used at some proof inference.

\subsection{Reference Proofs}

We start with discussing the methodology we used for extracting the resolution proofs from a SAT solver and measuring their lengths. Similarly to the \( \mathtt{Complete} \) procedure implementation from Section~\ref{subsection:algorithm}, we use CaDiCaL 2.0 solver with UNSAT mode enabled. We also request the solver to produce the proofs in LRAT format; the choice of the proof format was dictated by the fact that it contains not only derived clauses but also \emph{the clauses used during the unit propagation}, which is essential for tracing the resolution sequence leading to a given learned clause.

To compute the proof length, we parse the proof, ignore the deletion lines, and simulate the resolution steps needed to derive a clause by going through the propagated clauses in reverse order, which is guaranteed to correspond to valid resolution steps due to the LRAT format structure. We also track the clauses derived in resolution steps \emph{and skip the previously visited clauses}, regardless of whether they were mentioned in the proof file. Given that the deduplication step may considerably influence the final proof length, we further investigate its importance.

\begin{figure}[!ht]
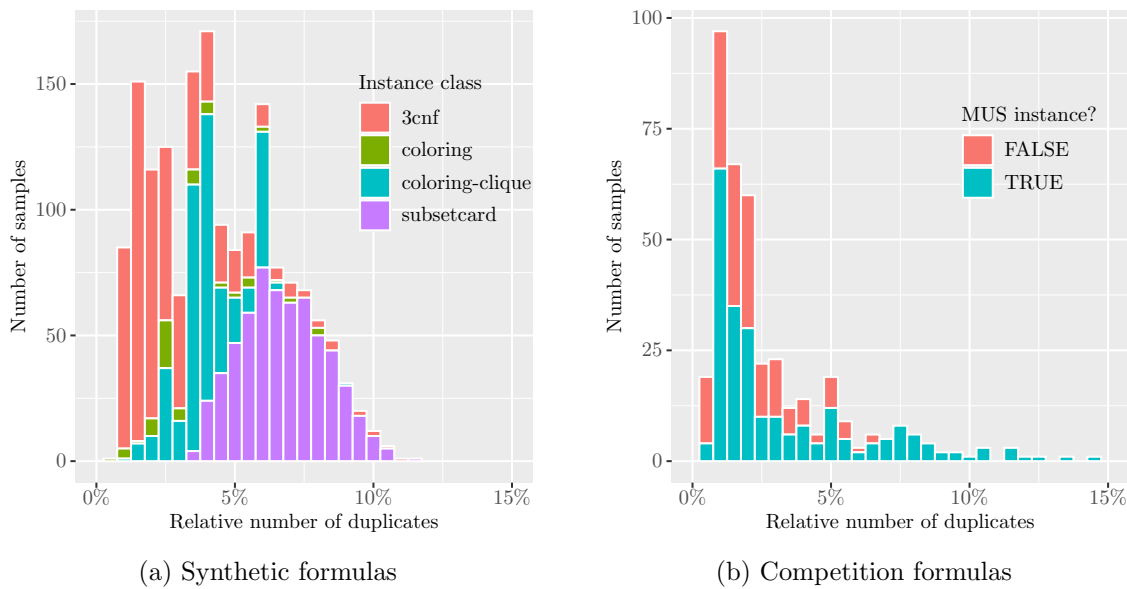

\centering
\subcaptionbox{Synthetic formulas\protect\label{fig:relative-dups}}[0.48\textwidth]{
    \resizebox{0.48\textwidth}{!}{
    \includesvg{relative-dups}}
}
\hfill
\subcaptionbox{Competition formulas\protect\label{fig:relative-dups-sat}}[0.48\textwidth]{
    \resizebox{0.48\textwidth}{!}{
    \includesvg{relative-dups-comp}}
}
\caption{Distribution of the fraction of duplicate derivations from the proof length with duplicated steps.}
\label{fig:dup-influence}
\end{figure}

Figure~\ref{fig:relative-dups} suggests that the influence of deduplication is severely limited for the synthetic formulas, with the subset cardinality formula class being the only formula type for which a non-trivial amount of proofs has been reduced by at least 6\%. The duplication effect is more pronounced for some of the competition formulas, particularly their MUS variants, as shown in Figure~\ref{fig:relative-dups-sat}, with some of the instances yielding as many as 15\% of duplicated inferences. However, typical competition formulas are even less prone to the duplication of clauses than synthetic formulas; for this reason, we use the proof lengths without the duplication as the reference values in the rest of the experimental evaluation.

\subsection{Resolution Number Computation}

We start the evaluation of our approach by reproducing the procedure for computing the \emph{\(m\)-th resolution hardness number}, which is defined by \citeA{Peitl2021-jArtifIntellRes} as the largest number of clauses necessary to prove the unsatisfiability of a formula with \( m \) clauses. In our notation, this can be formulated as \( h_m := \max_{F : \# F = m} \min \{ \# R \mid R \in \mathtt{Proofs}[F] \} \).

The approach from \citeA{Peitl2021-jArtifIntellRes} improves upon the naïve application of the \( h_m \) definition by: \begin{itemize}
    \item replacing the set of all \( m \)-clause formulas with a set of \emph{saturated minimally unsatisfiable} formulas,
    \item enumerating the equivalence classes of these formulas up to variable permutations,
    \item and evaluating the proof lengths for each of those formulas.
\end{itemize}

We introduce the formulas with up to 9 clauses obtained by this procedure into our dataset and report the results of both approaches on this subset of our collection. Table~\ref{table:hardness-number-time} shows the time\footnote{The values of \( h_1, h_2, h_3 \) are trivially proven without any explicit reasoning on the shortest proof problem.} to compute \( h_m \) values for \( 4 \le m \le 8 \) using the baseline approach, where the final step is offloaded to a SAT solver and our approach in optimality-focused configuration. We can confirm that the first 8 values of \( h_m \) computed with our approach and with the baseline are the same; additionally, we reduce the computational time spent on solving shortest proof problem instances by a factor of~7. In addition, we also ran the same evaluation for a subset of 9-clause formulas that we managed to enumerate, suggesting that the divergence between the runtimes of the approaches grows with the formula size.

\begin{table}
\centering
\begin{tabular}{@{}lllllll@{}}
\toprule
\multicolumn{1}{c}{Number of clauses \( m \)} & 4 & 5 & 6 & 7 & 8 & 9* \\ \midrule
Total duration, our approach (s)          & 0.06 & 0.05 & 0.32 & 1.51 & 16.9 & 289 \\
Total duration, baseline (s)              & 0.39 & 0.35 & 1.55 & 7.72 & 118 & 3424 \\ \bottomrule
\end{tabular}
\caption{Total execution time for solving the proof minimization problem for \( m \)-clause saturated minimally unsatisfiable formulas. The column with an asterisk corresponds to an \emph{incomplete} set of formulas.}
\label{table:hardness-number-time}
\end{table}

While these results are reassuring, they also mean that the \emph{enumeration} of candidate formulas is the primary hurdle to computing the next hardness numbers. For example, we have observed that even the task of enumeration of all 9-clause saturated minimally unsatisfiable formulas is prohibitively time-consuming, and the situation rapidly deteriorates as the formula size grows. For example, even producing the set of 5-variable formulas with 10 clauses requires 50 CPU hours, with the generation of subsets with higher variable counts being terminated after exhausting our 600 CPU hour budget.

\subsection{Evaluation on Synthetic Formulas}

We start by discussing the instances depending on a single parameter, namely, the pigeonhole formulas, parity formulas, and ordering principle formulas. Tables~\ref{table:parametric-time-php}, \ref{table:parametric-time-parity}, \ref{table:parametric-time-op-orig}, and \ref{table:parametric-time-op-mus} compare the proof lengths produced on these formulas by CaDiCaL with the proofs we have discovered with our approach in length-focused configuration, highlighting that even for these well-known formulas there is some room for improvement.

\begin{table}[ht]
\centering
\begin{tabular}{@{}lllllll@{}}
\toprule
\multicolumn{1}{c}{Number of pigeons \( n \)} & 1 & 2 & 3 & 4 & 5 & 6 \\ \midrule
Proof length, our approach                  & 5* & 19* & 66  & 277 &  1378 & 10420 \\
Proof length, CaDiCaL                      &   5 & 20  & 79  & 363  &  1793 & 14320 \\ \bottomrule
\end{tabular}
\caption{Proof lengths for pigeon-hole principle formulas. Values with asterisks are additionally proven to be optimal.}
\label{table:parametric-time-php}
\end{table}

\begin{table}[ht]
\centering
\begin{tabular}{@{}llllll@{}}
\toprule
\multicolumn{1}{c}{Number of elements \( n \)} & 1 & 2 & 3 & 4 & 5 \\ \midrule
Proof length, our approach                  & 11* & 81 & 513  &  4517 &  55247 \\
Proof length, CaDiCaL                      &  12 & 86  & 601  & 5161  &  60176 \\ \bottomrule
\end{tabular}
\caption{Proof lengths for parity principle formulas. Values with asterisks are additionally proven to be optimal.}
\label{table:parametric-time-parity}
\end{table}

\begin{table}[ht]
\centering
\begin{tabular}{@{}llllll@{}}
\toprule
\multicolumn{1}{c}{Number of elements \( n \)} & 1 & 2 & 3 & 4 & 5 \\ \midrule
Proof length, our approach                  & 5* & 16* & 39  & 77 & 134 \\
Proof length, CaDiCaL                      &  5 & 21  & 44  & 81  & 142 \\ \bottomrule
\end{tabular}
\caption{Proof lengths for ordering principle formulas. Values with asterisks are additionally proven to be optimal.}
\label{table:parametric-time-op-orig}
\end{table}

\begin{table}[ht]
\centering
\begin{tabular}{@{}llllll@{}}
\toprule
\multicolumn{1}{c}{Number of elements \( n \)} & 1 & 2 & 3 & 4 & 5 \\ \midrule
Proof length, our approach                  & 5* & 16* & 40  & 79 & 273 \\
Proof length, CaDiCaL                      &  5 & 18  & 70  & 223  &  561 \\  \bottomrule
\end{tabular}
\caption{Proof lengths for MUSes of ordering principle formulas. Values with asterisks are additionally proven to be optimal.}
\label{table:parametric-time-op-mus}
\end{table}

Among the other synthetic formulas, the only classes that have been solved to optimality are saturated MUSes and random 3-CNFs. We thus proceed to compare the performance of our approach on these instances against the approach by \citeA{Peitl2021-jArtifIntellRes}. To this end, we compare (a) the sets of instances solved by either approach and (b) the time it took both approaches to solve an instance to optimality within the time limit.

\begin{wrapfigure}{r}{0.45\textwidth}
    \centering
    \includesvg[width=0.4\textwidth]{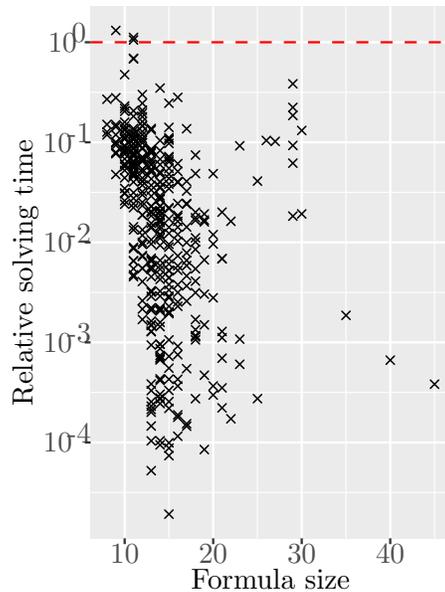}
    \caption{Distribution of the ratio of solving time with our approach to the baseline solving time; lower is better.}
    \label{fig:3cnf-mus-relative-time}
\end{wrapfigure}

For the number of solved instances, we observe that there is only \emph{one} instance that our approach has failed to solve to optimality but was solved by the baseline. On the other hand, our approach has solved 420 extra MUS instances on top of 405 MUS instances solved by both approaches. (The comparison on non-MUS instances is much more biased, with 16 instances solved by both approaches and 936 instances solved only with our approach.) Figure~\ref{fig:3cnf-mus-relative-time} shows the relative time to solve an instance for the samples solved by both approaches with respect to the number of clauses in a formula. The only instances where our approach had to take more time correspond to runs where both approaches finished in a fraction of a second; for the remaining instances, differences in orders of magnitude are not uncommon.

We have investigated the factors influencing the time to optimality, and our results suggest the exponential trend depending on the formula size and on the \emph{MUS bound gap}, which for the case of MUS formulas is defined as \( \# R^*(F) - (2\# F - 1) \), the gap between the shortest proof and the MUS lower bound from Lemma~\ref{lemma:bounding-mus-enum}. Figure~\ref{fig:3cnf-mus-bnb-time-gap} supports the claim about the MUS bound gap; as can be seen, the solving time scales exponentially with the increasing gap to MUS bound. On the other hand, Figure~\ref{fig:3cnf-mus-bnb-time-clauses} suggests that the search time is exponential, with the base determined by the MUS bound gap and the exponent defined by the formula size.

Another pattern that can be seen is the substantial performance difference between instances with zero MUS bound gap shown in Figure~\ref{fig:3cnf-mus-bnb-time-clauses-zero} from the other instances shown in Figure~\ref{fig:3cnf-mus-bnb-time-clauses-nonzero}. We believe that the search in the former case depends on how fast the right proof is discovered but not on the search tree structure, which also justifies randomization in proof completion. On the other hand, the instances from the latter group can only be considered solved once all \emph{subproblems} with the bound equal to the root subproblem bound are enumerated.

\begin{figure}[ht]
    \centering
    \includesvg[width=0.9\textwidth]{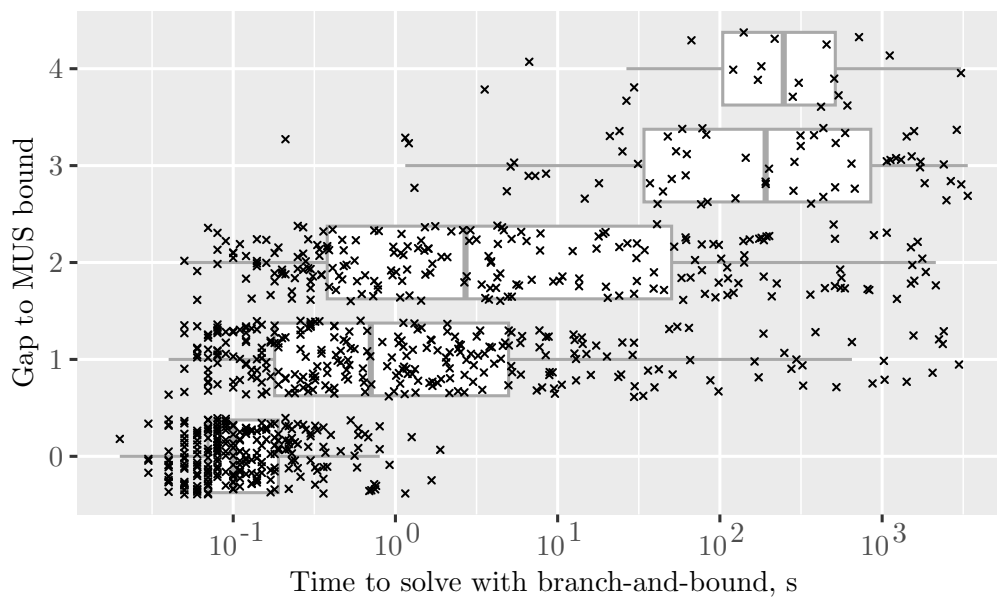}
    \caption{Distribution of the solving time of our approach by gaps to the MUS bound. A vertical jitter is added for clearer visualization.}
    \label{fig:3cnf-mus-bnb-time-gap}
\end{figure}

\begin{figure}
    \centering
    \subcaptionbox{Instances with zero MUS bound gap\protect\label{fig:3cnf-mus-bnb-time-clauses-zero}}{
        \includesvg{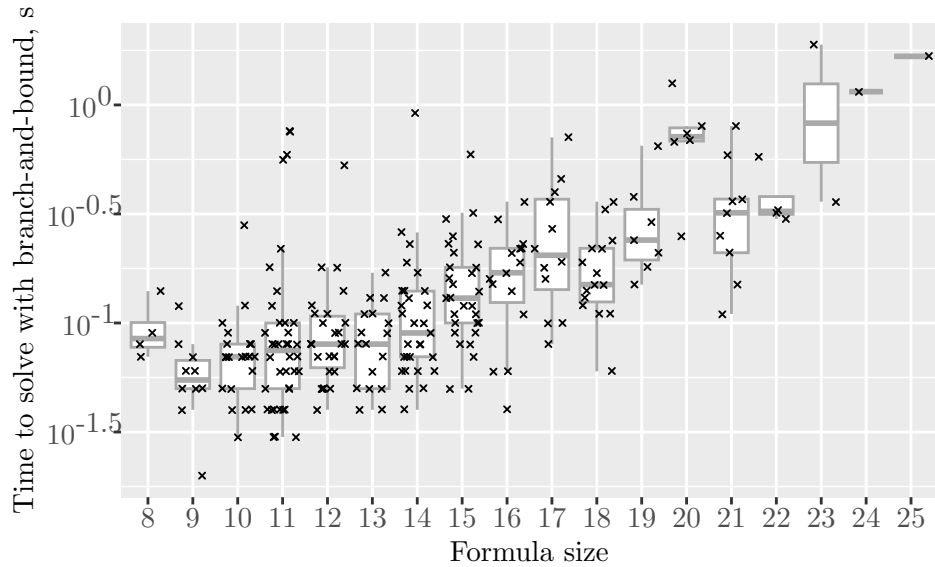}
    }
    \subcaptionbox{Instances with a nonzero MUS bound gap\protect\label{fig:3cnf-mus-bnb-time-clauses-nonzero}}{
        \includesvg{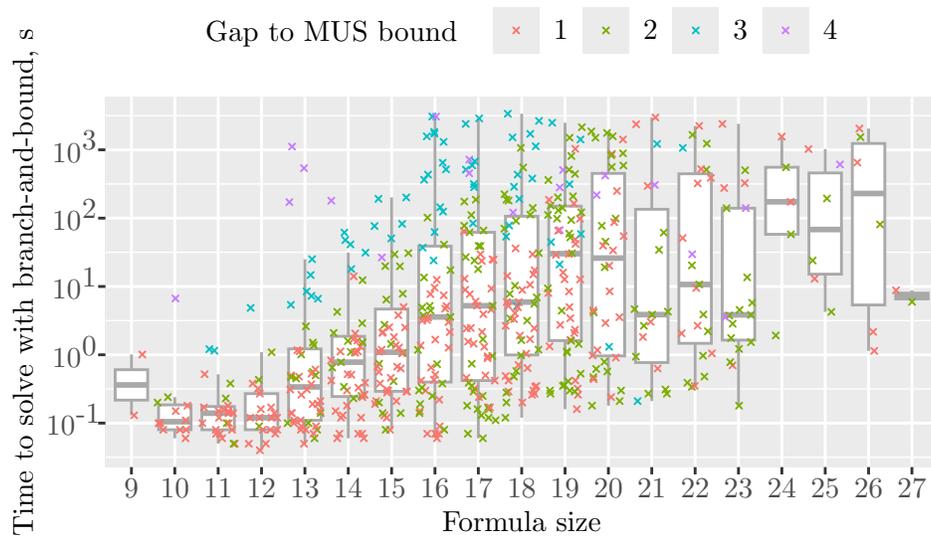}
    }
    
    \caption{Distribution of the solving time of our approach by formula size. A horizontal jitter is added for clearer visualization.}
    \label{fig:3cnf-mus-bnb-time-clauses}
\end{figure}

We have also compared the proof lengths across the benchmark formulas between our approach and CaDiCaL; Figure~\ref{fig:bnb-vs-cadical-length} summarizes this comparison. Our approach is consistently able to reduce the proofs by CaDiCaL by 25\%, with some of the proofs being halved in progress; the comparison is particularly striking for subset cardinality formulas, where our approach consistently finds reductions of at least 40\%.

\begin{figure}[!ht]
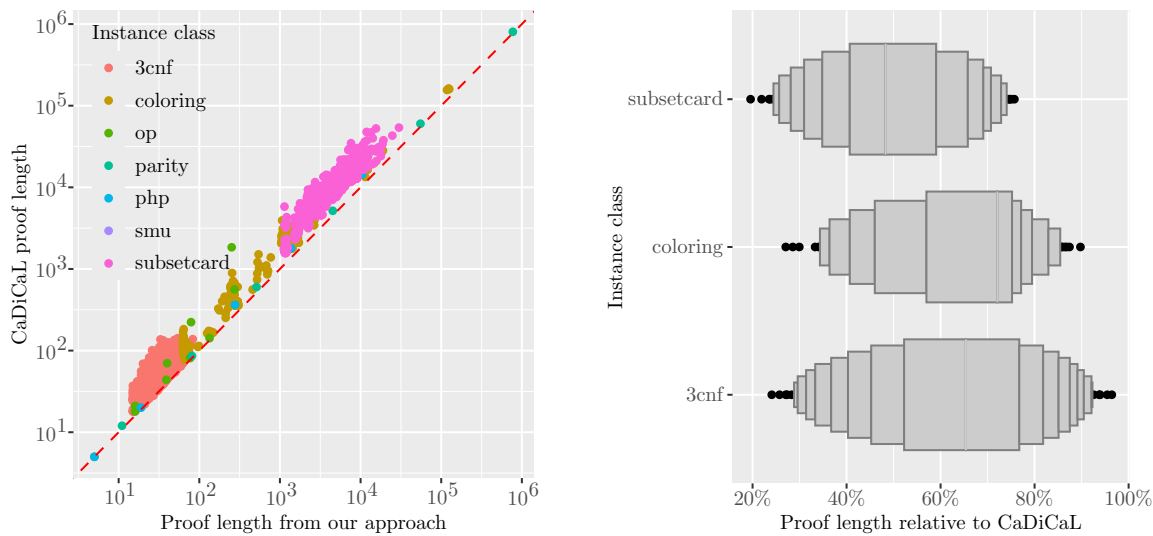

\centering
\subcaptionbox{Proof length of our approach vs. CaDiCaL proof length.}[0.48\textwidth]{
    \resizebox{0.48\textwidth}{!}{
    \includesvg{length-scatter}}
}
\hfill
\subcaptionbox{Ratio of our proof length to CaDiCaL proof length.}[0.48\textwidth]{
    \resizebox{0.48\textwidth}{!}{
    \includesvg{length-box}}
}
\caption{Comparison of proofs produced by our approach with CaDiCaL proofs.}
\label{fig:bnb-vs-cadical-length}
\end{figure}

\subsection{Evaluation on Competition Formulas}

Finally, we evaluate the performance of our approach on the UNSAT instances from SAT Competition 2002; we use the length-focused version of the approach with several changes:
\begin{itemize}
    \item We disable the randomization in order to evaluate how well our procedure is able to navigate the proof space without artificially diversifying it.
    \item We only consider the top 10 clauses after applying the ordering criterion in branching, as it is unreasonable to expect the completeness of this procedure.
    \item We disable the pruning of the subproblems, as the lower-bound pruning is several orders of magnitude away from the discovered proof lengths at best and times out without any meaningful bound at worst, whereas the dominance pruning does not yield any substantial advantage on larger formulas.
    \item We limit the queue size to \( 10^4 \), discarding the subproblems with the largest ``trivial" lower bound (length of the shortest clause), and earlier subproblems in case of ties.
\end{itemize} 

Figure~\ref{fig:competition-length} compares the CaDiCaL proof lengths with the proof lengths discovered by our approach. Typically,\footnote{We report the range between the first and third quartiles.} our approach reduces the proof length by 30---65\% of the proof length on the original competition instances and 15---50\% on their MUS versions. We also several instances where our approach reduces the proof length by at least a factor of 10: \begin{itemize}
    \item four FPGA routing formulas: \texttt{apex7\_gr\_2pin\_w4}, \texttt{c880\_gr\_rcs\_w6}, \texttt{term1\_gr\_2pin\_w3}, and \texttt{vda\_gr\_rcs\_w7},
    \item seven encodings of factorization circuits: \texttt{ezfact16\_\{1,2,3,5,8,9,10\}},
    \item and three instances produced from the formulas by \citeA{Urquhart1987-jAcm}: \texttt{urquhart2\_25}, \texttt{Urquhart-s3-b1}, and \texttt{Urquhart-s3-b9}.
\end{itemize}

\begin{figure}
    \centering
    \includesvg[width=1.0\linewidth]{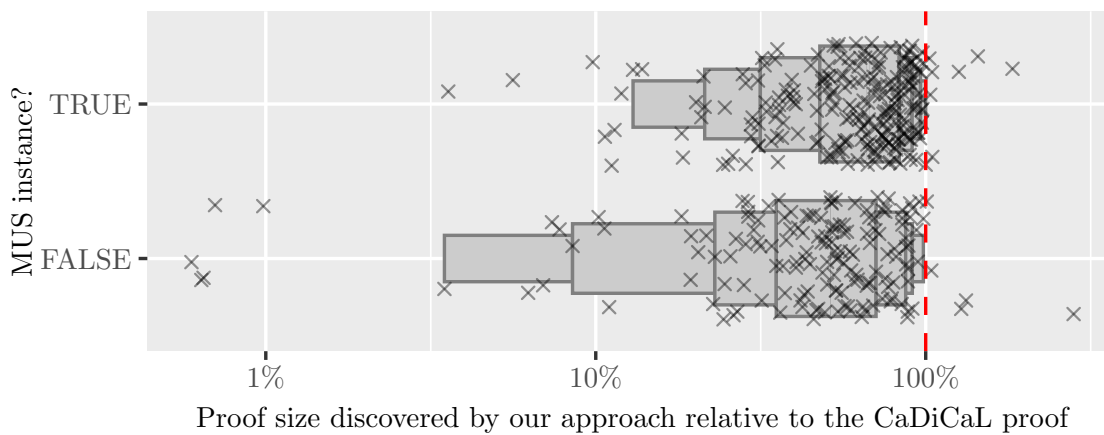}
    \caption{Distribution of the ratios of proof lengths produced with our approach to CaDiCaL proof lengths. Left is better; a vertical jitter is added for clearer visualization.}
    \label{fig:competition-length}
\end{figure}

However, our approach also crashes in 22 out of 414 instances (\(\approx 5\%\)) with out-of-memory errors, with two original SAT Competition instances and the remaining 20 being their MUS versions. These instances correspond to long CaDiCaL proofs, which translates into using large amounts of memory to unpack a single run of the solver. Figure~\ref{fig:competition-limits} illustrates this problem by comparing the CaDiCaL proof length for the instances where our approach has successfully terminated with the ones where our approach ran out of memory. The issues begin when the solver generates around \( 10^6 \) proof steps to prove the unsatisfiability, highlighting the limitations of using resolution proofs during the proof minimization.

\begin{figure}
    \centering
    \includesvg[width=1.0\linewidth]{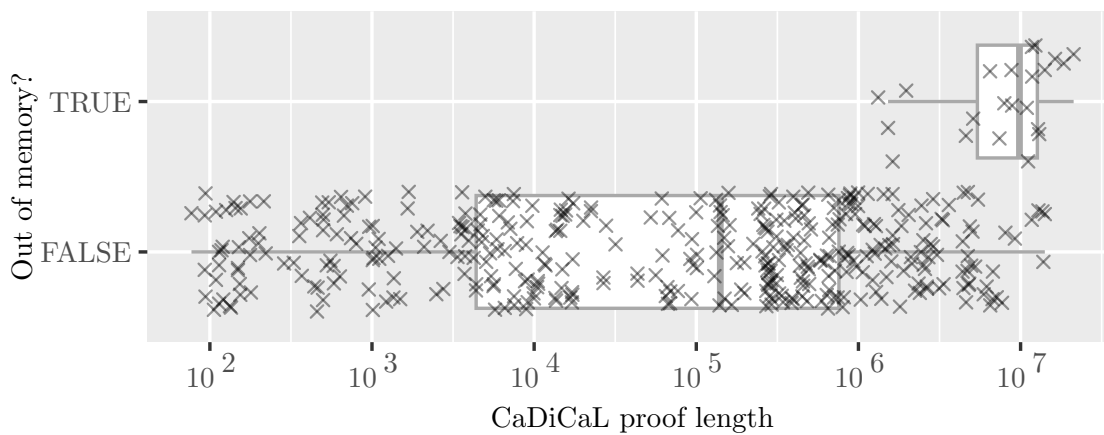}
    \caption{Distribution of the CaDiCaL proof lengths for failed and successful instances. A vertical jitter is added for clearer visualization.}
    \label{fig:competition-limits}
\end{figure}

\section{Conclusions} \label{section:conclusions}

We propose a novel approach for constructing the shortest resolution proof of a given unsatisfiable propositional formula. In its anytime form, this approach can also discover increasingly tight upper bounds (shorter proofs) and lower bounds (higher under-estimates of the proof length). The proposed approach has derived substantially shorter proofs on a wide range of instances; it has also managed to prove optimality much faster than the baseline approach for formulas with up to 45 clauses, often exhibiting speed-ups of orders of magnitude.

To achieve this, we also put forward several algorithmic improvements. First, we propose the layer list representation, which is uniquely defined for any set of clauses constituting a proof; in addition, this representation has a stepwise structure, which serves as the foundation of the branch-and-bound algorithm. To support it, we propose three distinct pruning procedures. The lower bound pruning reasons over MUSes of the current proof prefix, the frontier pruning discards the subsumed clauses and backtracks once any such clause also happens to be unused in the proof, and the dominance pruning compares the current proof prefix with the visited ones and backtracks if one of them is ``clearly worse'' than the other.

Our approach relies on a variety of observations and algorithmic routines that could be further improved with a substantial effect. One of the key improvement points is the bounding, as it relies on a SMUS solver as the source of the lower bounds, which suggests that a tighter integration of the SMUS solver into the branch-and-bound procedure would improve the runtime even further. On the other hand, Lemma~\ref{lemma:bounding-mus-enum} itself limits the tightness of a bound: as this bound cannot exceed the size of the formula, it cannot prune subproblems containing, for example, pigeonhole formulas. This indicates two alternative routes for improvement of the lower bound pruning, namely, deriving stronger relaxations than the one offered by Lemma~\ref{lemma:bounding-mus-enum} and detecting common sub-formulas with known superlinear bounds.

This approach can also be applied to proof systems that do not operate on propositional formulas. For example, \citeA{Sidorov2024-Other} have proposed a proof system that produces tree-like proofs of optimality in constrained shortest path problems. However, the approach for discovering short proofs in that system, while empirically performant, does not carry any form of optimality guarantees, suggesting an extension of the described approach to the shortest path (or other similar) proof systems. The main technical challenge in that line of work is in the bounding implementation, as it consumes the majority of time in the experimental runs as well as takes advantage of the shortest proof structure from Lemma~\ref{lemma:bounding-mus-enum}.

\section*{Acknowledgements}

Konstantin Sidorov is supported by the TU Delft AI Labs program as part of the XAIT lab.

\vskip 0.2in
\bibliography{references-final}
\bibliographystyle{theapa}

\appendix

\section{Proofs}
\label{appendix:proofs}

\begin{proof}[Proof of Lemma~\ref{lemma:branching}]
    Observe that \[ \Known{\mathcal{P}'_j} = \Known{\mathcal{P}} \cup \{ \omega_j \} \] and \[ \Known{\mathcal{P}^{commit}} = \Known{\mathcal{P}}. \] Suppose that \( L \) is a compatible proof of \( \mathcal{P} \) that starts with the clauses from \( \Known{\mathcal{P}} \); the remainder of the proof depends on whether \( L \) uses any \( \omega_j \) clauses.
    
    If no such clauses are produced in \( L \), then \( L \) is also a compatible proof for \( \mathcal{P}^{commit} \), because extending the forgotten set does not impact any further decisions. However, \( \Known{\mathcal{P}^{commit}} \) only differs from \( \Known{\mathcal{P}} \) by pushing some of the clauses ``backward". As some prefix of \( L \) agrees with \( \mathcal{P} \), it must also agree with \( \mathcal{P}^{commit} \).

    Otherwise, let \( \omega_j \) be the clause used in \( L \) with the smallest index \( j \); by take-it-or-leave-it property, that means that the clauses \( \omega_1, \dotsc, \omega_{j-1} \) are not used in the proof. Then \( L' \) is a compatible proof of \( \mathcal{P}_j \), as it starts with the same prefix (up to the current layer) and does not use clauses from the set \( \{ \omega_1, \dotsc, \omega_{j-1}, \omega_j \}\), which is exactly the definition of \(  \mathcal{P}_j \).
\end{proof}

\begin{proof}[Proof of Theorem~\ref{theorem:frontier-suffiency}]
    Let \( (Q_j = L_j \diamond R_j )_{j=1}^K \) be the suffix proof of \( F \) following the introduction of axioms. We describe how to construct a proof \( (Q'_j = L'_j \diamond R'_j)_{j=1}^K \) of \( \Frontier{F} \) such that \( Q'_j \subseteq Q_j \). As the last clause \( Q_K \) of the original proof is empty, so is the last clause \( Q'_K \) of the frontier proof.

    Given a clause \( L_j \) of the original proof, let \( U_j \subseteq L_j \) be a clause from the frontier proof included in the corresponding premise of the original proof. More specifically, \( U_j \fin F \) if \( L_j \in F \) and \( U_j = Q'_\ell\) if \( U_j = Q_\ell \) for \( \ell < j \). Similarly, let \( V_j \subseteq R_j \) be a corresponding clause for another premise of the same step. (Note that this definition only depends on the clauses from the original formula and earlier derivation steps, which leads to a proof by induction.)

    Suppose that the pivot literal is present in both premises \( U_j \) and \( V_j \); in that case, any literal in \( U_j \diamond V_j \) is contained either in \( U_j \subseteq L_j \) or in \( V_j \subseteq R_j \), meaning that \( U_j \diamond V_j \subseteq L_j \diamond R_j = Q_j \). Thus, in that case setting \( (L'_j, R'_j) = (U_j, V_j) \) is sufficient to ensure \( Q'_j \subseteq Q_j \).

    On the other hand, if the pivot literal is missing in (without loss of generality) \( U_j \), deriving \( Q'_j = U_j \) is sufficient to ensure \( Q'_j \subseteq Q_j \). If \( U_j \in_\mathcal{F} F \), then we can introduce it with a bogus derivation \( U_j \diamond U_j = U_j \). Otherwise, \( U_j \) has already been derived by the \( j \)-th step by definition of \( U \)-clauses, meaning that we can copy this derivation into the \( j \)-th step.
\end{proof}

\begin{proof}[Proof of Lemma~\ref{lemma:lookup}]
    Consider a compatible proof \( L' \) of \( \mathcal{P}' \); we show that there is a compatible proof \( L \) of \( \mathcal{P} \) that delivers a proof at most as long as \( L' \). We assume that the first \( k \) layers are fixed by the \( \Previous{\mathcal{P'}} \) and \( \Current{\mathcal{P'}} \) sets. Suppose that the \( \Next{\mathcal{P}'} \) corresponds to the \( k' \)-th layer of \( L' \); we start the construction by adding all the missing clauses from that layer into the \( k \)-th layer of \( L \), which is possible by Definition~\ref{def:dominance}.2.

    Next, we reproduce the suffix of the \( L' \), starting with \( (k'+1)\)-st layer, as the suffix of \( L \) -- that is possible by Definition~\ref{def:dominance}.3 (the dominating problem can derive any clause the dominated problem can, and using the same premises at that) and~\ref{def:dominance}.4 (if a clause has not been forgotten in the dominated problem, it cannot be forgotten in the dominating problem). This completes the construction, and it remains to show that the resulting proof is not worse than the original one.
    
    Our construction mirrors the clauses from the final part of \( L' \) to produce  \( L \), implying that this suffix \( L \) has the same clauses as the suffix of \( L' \). In addition, as it uses the same premises to derive the same resolvents, if \( L \) uses a new axiom, so does \( L' \) due to~\ref{def:dominance}.1; this, in turn, means that \( L \) does not use more axioms than \( L' \). Applying Definition~\ref{def:dominance}.5 shows that the remaining part of the proof between the axioms and ``new" clauses is at most as long in \( L \) as in \( L' \). Combining these three facts, we derive that \( L \) does not introduce more clauses than \( L' \), which is precisely the requirement of Proposition~\ref{proposition:correctness}.3.
\end{proof}

\begin{proof}[Proof of Lemma~\ref{lemma:bounding-mus-enum}]
    Observe that any compatible proof of \( \mathcal{F} \) can be seen as a proof of unsatisfiability of some subset of clauses \( S \fsubseteq \Known{\mathcal{P}} \), which means that the lowest bound among the bounds for \( S \fsubseteq \Known{\mathcal{P}}  \) is a correct bound for any ``remaining" proof length for \( \mathcal{P} \). Given that \emph{all} of the clauses in \( S \) are used in the proof, we can use the proof of Lemma~\ref{lemma:bounding-mus} to show that any such proof has length at least \( \#S - 1 \). We complete the proof by taking the most optimistic of discovered bounds, which corresponds to minimizing \( \#S - 1\) over UNSAT subsets of \( \Frontier{\Known{\mathcal{P}}} \), just as in the lemma statement.
\end{proof}

\begin{proof}[Proof of Lemma~\ref{lemma:bounding-mus-enum-fixed}]
    For an unused clause \( \omega \in U \), retrace the branching steps needed to reach the subproblem \( \mathcal{P} \) and forget \( \omega \) instead of deriving it. The resulting problem can be used as \( \mathcal{P}' \), and omitting the derivation for \( \omega \) yields its compatible proof \( L' \).
\end{proof}

\section{Algorithm Configuration Experiments}
\label{appendix:configuration}

To generate the dataset for configuring the heuristic parameters, we use the same procedure as used in the main text for the synthetic dataset with fewer generator runs and only include random 3-CNFs, subset cardinality, and graph coloring formulas. We do this to facilitate the algorithm parameter tuning without overfitting the collection used to report the results.

All the experiments in this section are executed on the same hardware as the experiments in the main text but with lower resource limits: 5 minutes and 12 GB of RAM per run. The only exception is Section~\ref{appendix:sub:cleanup}, as those experiments were run with 1-hour time limits; we had to raise the limits for this set of experiments since the 5-minute runs did not show any meaningful difference between the dominance cache clean-up strategies.

During our experiments, we observed that the proof length exhibits little variance with respect to the algorithm configuration. Our explanation of this is that the instances in the validation set are simple enough to accommodate many completion calls, with one of them being likely to yield a good proof regardless of the algorithm configuration. Therefore, to simplify our analysis, we only tune the algorithm for the time-focused runs and use the same configuration for length-focused runs except for the completion procedure (CaDiCaL) and clause ordering (ascending length).

Throughout the analysis, we evaluate different configurations based on the \emph{virtual best} time, which is defined for any instance from the validation sample as the fastest time to optimality among the considered configurations. More precisely, we compare the distributions of \emph{relative gaps} to virtual best, which we define for an experimental run as a ratio of the elapsed time to optimality (if applicable) to the virtual best time for that instance.

\subsection{Completion Backend and Randomization}

We start by comparing four configurations corresponding to two different completion backends (DPLL, CaDiCaL) and static/dynamic seeding. Figure~\ref{fig:config-backend-time} reports the results of this comparison, which suggests that the DPLL backend with dynamic seeding is the most robust configuration. Dynamic seeding appears to be helpful regardless of the completion backend as it helps diversify the proofs produced in the completion subroutine, thereby increasing the odds of discovering the optimal proof. Surprisingly, however, DPLL is consistently faster in proving optimality than CaDiCaL; we believe that the reason behind this is that if an \emph{optimal} proof can be established, then it is likely to be extremely simple, and the overhead of running CaDiCaL negates any advantage that could be gained from receiving high-quality proofs from it.

\begin{figure}
    \centering
    \includesvg[width=1.0\linewidth]{completion-shuffle-time}
    \caption{Distribution of relative gap to virtual best for completion backend and seeding strategy combinations.}
    \label{fig:config-backend-time}
\end{figure}

\subsection{Clause Ordering Strategy}

We evaluated two different ordering strategies that depend on sorting clauses by their length: \emph{ascending length} corresponds to sorting clauses from short to long, and \emph{descending length} corresponds to sorting clauses from long to short. As these strategies produce many ties, we also consider the \emph{ascending frequency} and \emph{descending frequency} tie-breaking strategies, which count the occurrences of a clause literal in the frontier set and give preference to respectively the least and the most occurrences.

\begin{table}[]
\begin{tabular}{|l|c|c|c|c|}
\hline
\textbf{Strategy}  & \makecell{Length ↑,\\frequency ↑} & \makecell{Length ↑,\\frequency ↓} & \makecell{Length ↓,\\frequency ↑} & \makecell{Length ↓,\\frequency ↓} \\ \hline
\textbf{Gap to the virtual best} & 498\%                            & 402\%                              & 322\%                              & 259\%                                \\ \hline
\end{tabular}
\caption{Relative gap to the virtual best time to optimality per sorting strategy; all values are averaged across all runs with geometric mean.}
\label{table:config-ord-time-vbs}
\end{table}

\begin{figure}[h!]
    \centering
    \includesvg[width=0.8\linewidth]{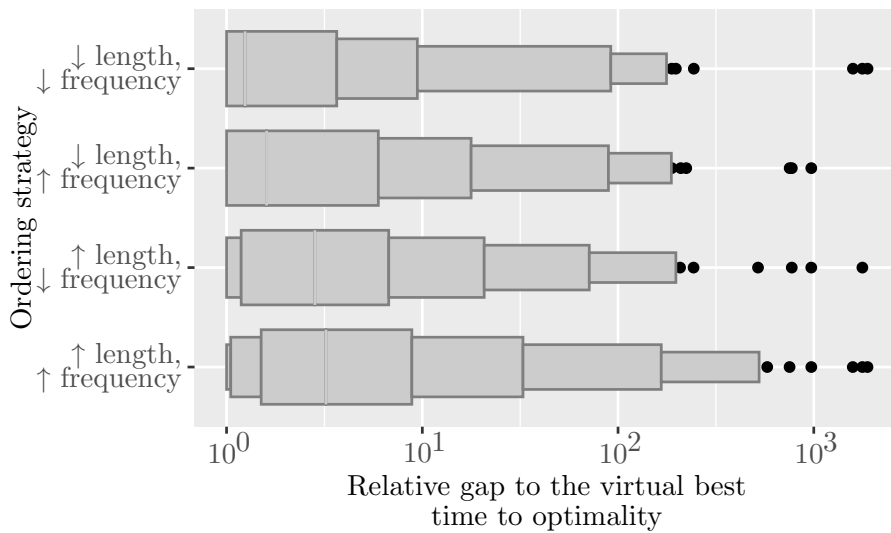}
    \caption{Distribution of relative time to optimality between ascending-length and descending-length ordering strategies.}
    \label{fig:config-ord-time}
\end{figure}

Table~\ref{table:config-ord-time-vbs} compares the gaps to the virtual best time to optimal, suggesting the descending-length, descending-frequency strategy as the clear leader. Figure~\ref{fig:config-ord-time} supports this statement and highlights the robustness of this choice, with all the illustrated percentiles being better for the descending-length, descending-frequency strategy. Our explanation for these results is that starting with longer clauses is helpful because that means that the longest clauses are deleted in most of the produced subproblems, in extreme cases even being immediately subsumed by some other clause.

\subsection{SMUS Switch-point}

\begin{table}[]
\begin{tabular}{|l|c|c|c|c|c|c|}
\hline
\textbf{Switch-point}  & 16 & 20 & 24 & 28 & 32 & 36 \\ \hline
\textbf{Gap to the virtual best} & 862\% & 529\% & 437\% & 377\% & 478\% & 500\% \\ \hline
\end{tabular}
\caption{Relative gap to the virtual best time to optimality; all values are averaged across all runs with geometric mean.}
\label{table:switch-point}
\end{table}

\begin{figure}
    \centering
    \includesvg[width = 0.8\linewidth]{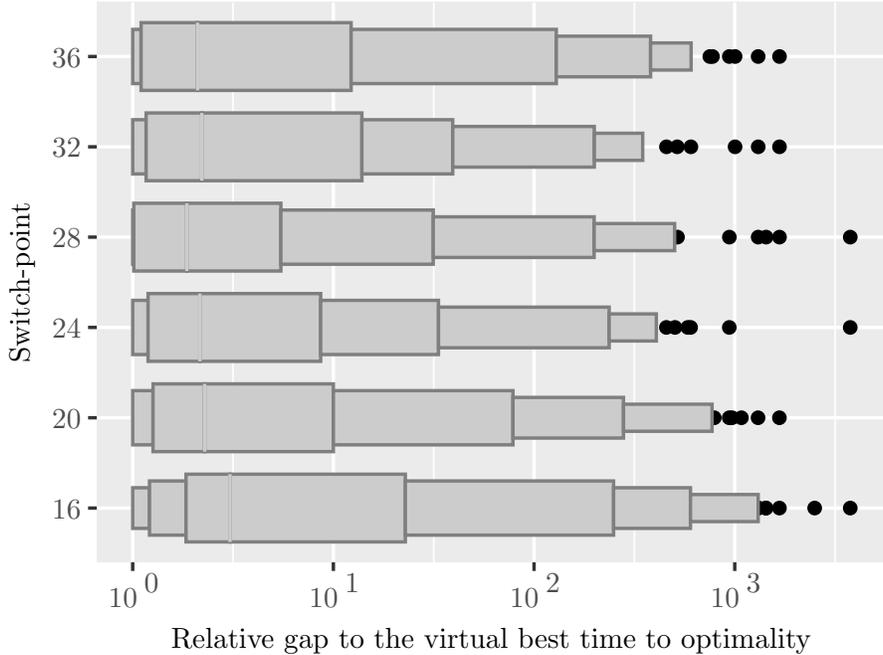}
    \caption{Distribution of relative gaps to the virtual best times per switch point value.}
    \label{fig:switch-point}
\end{figure}

We considered the switch-point values \( M_{switch} \in \{ 16, 20, 24, 28, 32 \} \), and, as before, we compare the time to optimality with the virtual best time for that instance. Table~\ref{table:switch-point} aggregates the geometric means of relative times to optimality for each of the \( M_{switch} \) values and suggests using \( M_{switch} = 28 \) as the switch-point value performing best on average. This suggestion is also supported by Figure~\ref{fig:switch-point}, which underlines that most of the quantiles are also the best for \( M_{switch} = 28 \), except for the highest ones. 

\subsection{Dominance Cache Clean-up}
\label{appendix:sub:cleanup}

\begin{table}[]
\begin{tabular}{|l|c|c|c|c|}
\hline
\textbf{Cache entry lifetime \( T_{\max} \)} & \( +\infty \) & \( 10^3 \) & \( 10^4 \) & \( 10^5 \) \\ \hline
\textbf{Gap to the virtual best} & 508\% & 344\% & 372\% & 264\% \\ \hline
\end{tabular}
\caption{Relative gap to the virtual best time to optimality; all values are averaged across all runs with geometric mean.}
\label{table:dominance}
\end{table}

We consider discarding the entries that have not been accessed for the last \( T_{\max} \) branch-and-bound loop iteration; to this end, we evaluate the performance on the grid of values \( T_{\max} \in \{ 10^3, 10^4, 10^5, +\infty \} \). Table~\ref{table:dominance} suggests that this is a reasonable strategy for reducing the memory footprint; while all considered values of \( T_{\max} \) exhibit improvement over the default strategy, we chose \( T_{\max} = 10^5 \) due to its better average performance and more consistent behavior, as indicated by Figure~\ref{fig:dominance}.

\begin{figure}
    \centering
    \includesvg[width = 0.8\textwidth]{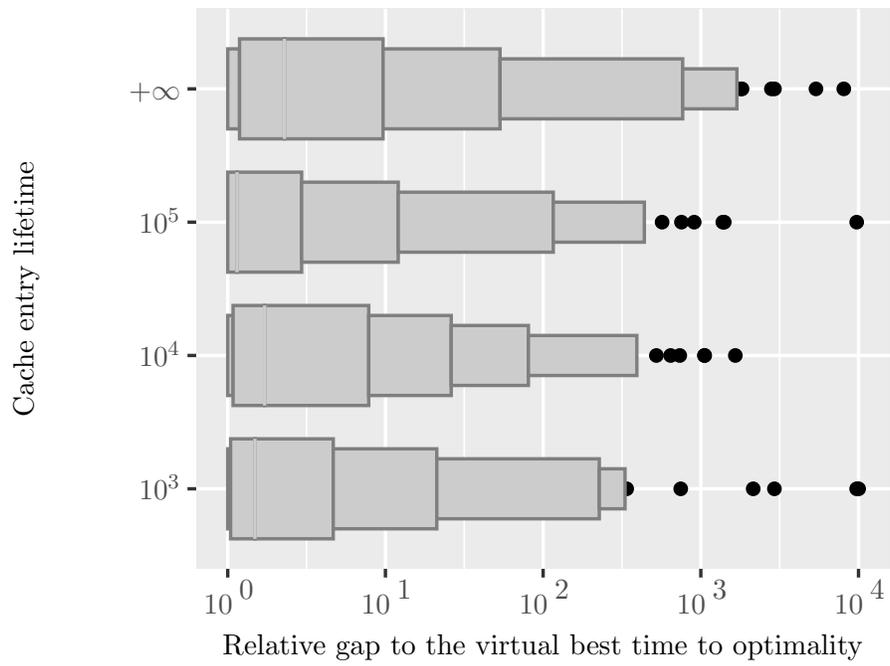}
    \caption{Distribution of relative gaps to the virtual best times per \( T_{\max} \) value.}
    \label{fig:dominance}
\end{figure}

\end{document}